\documentclass[hidelinks,twoside]{article}

\usepackage[accepted]{aistats2022}

\setlength{\pdfpageheight}{11in}
\setlength{\pdfpagewidth}{8.5in}

\usepackage[round]{natbib}

\usepackage[utf8]{inputenc} %
\usepackage[T1]{fontenc}    %
\usepackage{hyperref}       %
\usepackage{url}            %
\usepackage{booktabs}       %
\usepackage{amsfonts}       %
\usepackage{nicefrac}       %
\usepackage{microtype}      %
\usepackage{xcolor}         %

\usepackage{breqn}

\usepackage{amsmath,amssymb,amsthm}
\usepackage{comment}
\usepackage{xcolor}
\usepackage{graphicx}
\usepackage{dsfont}
\usepackage{upgreek}
\usepackage{siunitx}

\usepackage[ruled,vlined]{algorithm2e}
\SetKwComment{Comment}{$\triangleright$\ }{}

\usepackage{caption}
\usepackage{subcaption}
\usepackage{float}
\usepackage{physics}
\usepackage{graphbox}
\usepackage{mathrsfs}

\newtheorem{remark}{Remark}

\usepackage{mathtools}

\DeclareMathOperator{\EX}{\mathbb{E}}%

\DeclareMathOperator{\lqr}{\mathcal{L}_\mathrm{QR}}%

\newcommand{\codecx}[1]{{\tt \text{#1}}\xspace} %
\newcommand{\dqnzoo}{\codecx{DQN\_ZOO}}

\newcommand{\simd}{\overset{D}{\equiv}}

\DeclareMathOperator*{\argmax}{arg\,max}

\newcommand{\ellpp}{\ell_p^p}
\newcommand{\ellp}{\ell_p}
\newcommand{\pip}{\Pi_{\ellp}}

\newcommand{\uij}{u_{ij}}
\newcommand{\uii}{u_{ii}}

\newcommand{\dij}{\delta_{ij}}
\newcommand{\dii}{\delta_{ii}}

\newcommand{\auij}{\abs{\uij}}
\newcommand{\auii}{\abs{\uii}}

\newcommand{\htaui}{\hat{\tau}_i}

\newcommand{\R}{\mathbb{R}}

\newcommand{\ind}{\mathds{1}}

\newcommand{\T}{T}

\newcommand{\TF}{\bar{F}}
\newcommand{\Ttheta}{\bar{\theta}}

\newcommand{\vtheta}{{\boldsymbol{\uptheta}}}
\newcommand{\vTtheta}{{\boldsymbol{\bar{\uptheta}}}}

\usepackage{thmtools}
\usepackage{thm-restate}

\begin{document}

\twocolumn[

\aistatstitle{A Cramér Distance perspective on Quantile Regression based Distributional Reinforcement Learning}

\aistatsauthor{ Alix Lhéritier \And Nicolas Bondoux }

\aistatsaddress{ Amadeus SAS, F-06902 Sophia Antipolis, France } ]

\begin{abstract}
Distributional reinforcement learning (DRL) extends the value-based approach by approximating the full distribution over future returns instead of the mean only, providing a richer signal that leads to improved performances. 
Quantile Regression (QR) based methods like QR-DQN project arbitrary distributions into a parametric subset of staircase distributions by minimizing the 1-Wasserstein distance. However,  due to biases in the gradients, the quantile regression loss is used instead for training, guaranteeing the same minimizer and enjoying unbiased gradients. 
Non-crossing constraints on the quantiles have been shown to improve the performance of QR-DQN for uncertainty-based exploration strategies. 
The contribution of this work is in the setting of fixed quantile levels and is twofold. First, we prove that the Cramér distance yields a projection that coincides with the 1-Wasserstein one and that, under non-crossing constraints, the squared Cramér and the quantile regression losses yield collinear gradients, shedding light on the connection between these important elements of DRL. Second, we propose a low complexity  algorithm to compute the Cramér distance.

\end{abstract}

\section{INTRODUCTION}

Distributional Reinforcement Learning (DRL) extends the value-based approach of DQN \citep{dqn} by considering the full distribution of returns as a learning signal allowing to take into account all the complexity of the randomness coming from the rewards, the transitions and the policy, which is hidden when considering the mean only. Even when a policy aims at maximizing the expected return, considering the full distribution provides an advantage in the presence of approximations, allowing to learn better representations and helping to reduce state aliasing \citep{c51}. With this new approach comes a generalization of the \emph{Bellman operator}---the \emph{distributional} Bellman operator---, whose contraction properties are key for guaranteeing the stability of DRL algorithms.

How distributions are represented and learned is also a key point, since some choices can break the contraction property \citep[Lemma 2]{rowland2018analysis}. Some approaches use staircase parametric representations whose steps correspond to fixed quantile values like in C51 \citep{c51} or to fixed quantile levels like in QR-DQN \citep{qrdqn}. Alternatively, FQN \citep{fqn} fully parameterize the staircase distributions.  IQN \citep{iqn} follows a different approach by approximating the quantile function with a neural network that takes the quantile level as input and must therefore be sampled during training.

DRL methods resort to different notions of distance or divergence between distributions in order to practically learn them but also to analyze the effect on the contraction property of the distributional Bellman operator.
In \cite{rowland2018analysis}, a Hilbert space endowed with the $\ell_2$ norm on cumulative distribution functions has been shown to be a natural framework to analyze the effect of the fixed quantile value representation of C51.
In \cite{cramerGAN},  the squared $\ell_2$ distance, called \emph{Cramér distance} in that work,\footnote{In this work, we follow \cite{rowland2018analysis} and use the term Cramér distance for the $\ell_2$ distance.} has been proposed  for Generative Adversarial Networks but also for machine learning in general due its unbiased gradients. 
In \cite{qrdqn}, the Wasserstein distance has been used for defining how a general distribution should be represented with fixed quantile levels and also to analyze the effect on the contraction property of the distributional Bellman operator. However, due to the biased gradients of the Wasserstein distance, the quantile regression loss is used to train the network, guaranteeing the same minimizer as the 1-Wasserstein distance and enjoying unbiased gradients.

When estimating multiple quantiles, one faces the issue of crossing quantiles, i.e., a violation of the monotonicity of the quantile function.
In QR-DQN, crossing quantiles make the learning signal noisy, affecting disambiguation of states as shown in \cite{nc-qrdqn}.
This issue has been addressed in the statistical literature of quantile regression (see, e.g. \cite{koenker1994quantile,he1997quantile,liu2009stepwise,hall1999methods,dette2008non,bondell2010noncrossing}) but also, more generally, in the machine learning literature on how to represent and learn monotonic functions (see, e.g., \cite[Table 1]{gupta2016monotonic}), with different approaches like including penalties in the loss function or enforcing monotonicity by design. 
Methods that take sampled quantile levels as input during training like \cite{sqr} or \cite{iqn}, have been shown to alleviate the problem.
In the DRL literature, \cite{nc-qrdqn,ndqfn} enforce monotonicity with special neural network designs obtaining improved results with respect to QR-DQN, in the setting of uncertainty-based exploration.

In this work, we analyze QR-based methods from a Cramér distance perspective and propose its square as an alternative loss function. 
In Section \ref{sec:background}, we expose the necessary background. 
In Section \ref{sec:equiv-wasserstein}, we show that the Cramér distance projection coincides with the 1-Wasserstein one, yielding a contraction guarantee.
In Section \ref{sec:equiv-qrloss}, we propose an alternative expression of the Cramér distance allowing to show that the QR and the Cramér losses are essentially equivalent for gradient based optimization under monotonicity constraints.
In Section \ref{sec:algorithm}, we propose another alternative expression of the Cramér distance based on quantile sorting, leading to an $O(N \log N)$ algorithm in contrast to the $O(N^2)$ complexity of the QR loss.
In Section \ref{sec:experiments}, we experimentally compare the different losses, illustrating the theory and the algorithm but also hinting at future research directions discussed in Section \ref{sec:conclusion}.

\section{BACKGROUND}
\label{sec:background}

We consider the classical model of agent-environment interactions \citep{puterman2014markov}, i.e.,  a Markov Decision Process (MDP)  $(\mathcal{S}, \mathcal{A}, R, P, \gamma)$, with $\mathcal{S}$ and $\mathcal{A}$ being the state and action space, $R: \mathcal{S} \times \mathcal{A} \rightarrow \mathbb{R}$ being the reward function, $P(s'|s,a): \mathcal{S} \times \mathcal{A} \times \mathcal{S} \rightarrow[0,1]$ being the probability of transitioning from state $s$ to state $s'$ after taking action $a$ and $\gamma \in[0,1)$ the discount factor. 
A stochastic policy $\pi(\cdot|s): \mathcal{S} \times \mathcal{A} \rightarrow[0,1]$  maps  a state $s$ to a distribution over $\mathcal{A}$. 

\subsection{Q-Learning}

For a fixed policy $\pi$, the \emph{return} $Z^{\pi}(s, a)$ is a random variable (RV) representing the discounted cumulative rewards the agent gains from a state $s$ by taking the action $a$  and then  following the policy $\pi$, i.e., $Z^{\pi}(s, a)\equiv\sum_{t=0}^{\infty} \gamma^{t} R\left(s_{t}, a_{t}\right)$ with $s_{0}=s, a_{0}=a$ and $s_{t+1} \sim P\left(\cdot \mid s_{t}, a_{t}\right), a_{t} \sim \pi\left(\cdot \mid s_{t}\right)$. 
The usual goal in reinforcement learning (RL) is to find an optimal policy $\pi^*$ maximizing the \emph{state-action value function} $Q^\pi(s,a)\equiv \EX{Z^{\pi}(s, a)}$, i.e., $Q^{\pi^*}(s,a)=\max_\pi Q^\pi(s,a)\equiv Q^*(s,a)\; \forall s,a$.
\emph{Q-Learning} \citep{watkins1992q} is an off-policy reinforcement learning algorithm
that directly learns the optimal state-action value
function  using the \emph{Bellman optimality operator}
\begin{equation}
(\mathcal{T} Q)(s, a) \equiv\mathbb{E} R(s, a)+\gamma \mathbb{E}_{P} \max _{a^{\prime} \in \mathcal{A}} Q\left(s^{\prime}, a^{\prime}\right).
\end{equation}
In the evaluation case, the \emph{Bellman operator} $\mathcal{T}^\pi$ \citep{Bellman:1957,watkins1992q}  is defined as
\begin{equation}(\mathcal{T}^{\pi} Q)(s, a) \equiv\mathbb{E} R(s, a)+\gamma \underset{P, \pi}{\mathbb{E}} Q\left(s^{\prime}, a^{\prime}\right)
.
\end{equation}
These operators are contractions and their repeated application to some initial value function $Q_0$
converges exponentially to  $Q^*$ or $Q^\pi$, respectively \citep{bertsekas1996neuro}.
However, when $Q$ is represented by a neural network  that is trained on batches of sampled transitions $(s,a,r,s')$ as in most deep learning studies, a gradient update is preferred since it allows for the dissipation of noise introduced in the target by stochastic approximation \citep{bertsekas1996neuro,kushner2003stochastic}.  
DQN \citep{dqn} iteratively trains the network by minimizing the squared \emph{temporal difference (TD)} error $\frac{1}{2}\left[r+\gamma \max _{a^{\prime}} Q_{\omega^{-}}\left(s^{\prime}, a^{\prime}\right)-Q_{\omega}(s, a)\right]^{2}$ 
over samples $(s,a,r,s')$, where $\omega^-$ is the target network, which is a copy of $\omega$, synchronized with it periodically.
When using an \emph{$\varepsilon$-greedy policy}, the samples are obtained while the agent interacts with the environment choosing actions uniformly at random with probability $\varepsilon$ and otherwise according to $\arg \max _{a} Q_\omega(s, a)$.

\subsection{Distributional reinforcement learning}

In order to extend the previous concepts to DRL, the {distributional Bellman operator} and \emph{optimality operator} \citep{c51} are defined as 
\begin{align}
&(\mathcal{T}^{\pi} Z)(s, a) \simd R(s, a)+\gamma Z\left(s^{\prime}, a^{\prime}\right),  a^{\prime} \sim \pi\left(\cdot \mid s^{\prime}\right)\label{eq:drl-bellman-op}\\
&(\mathcal{T} Z) (s, a)\simd R(s, a)+\gamma Z\left(s^{\prime}, \argmax_{a^{\prime} \in \mathcal{A}}  \mathbb{E}_p Z\left(s^{\prime}, a^{\prime}\right)\right)\nonumber\\
&\text{with }s^{\prime} \sim p(\cdot \mid s, a),\nonumber
\end{align}
where $Y\simd U$ denotes equality of probability laws, i.e., the RV $Y$ is distributed according to the same law as $U$.
In order to characterize the contraction properties of these operators, some notion of distance between indexed collections of distributions is necessary.
The \emph{$p$-Wasserstein distance} between two RV $U$ and $Y$ is the $\ell_p$ metric between their inverse cumulative distribution functions (inverse CDFs) \citep{muller1997integral}, i.e.,
$$
d_{p}(U, Y)\equiv\left(\int_{0}^{1}\left|F_{Y}^{-1}(\omega)-F_{U}^{-1}(\omega)\right|^{p} d \omega\right)^{1 / p}
$$
where, for a RV $Y$, the \emph{inverse CDF} $
F_{Y}^{-1}(\omega)\equiv\inf \left\{y \in \mathbb{R}: \omega \leq F_{Y}(y)\right\}
$
where ${F_{Y}(y)\equiv\operatorname{Pr}(Y \leq y)}$ is the CDF of $Y$.\footnote{For $p=\infty$, $d_{\infty}(Y, U)\equiv\sup _{\omega \in[0,1]}\left|F_{Y}^{-1}(\omega)-F_{U}^{-1}(\omega)\right|$.}
Then, the maximal Wasserstein metric between two indexed collections of distributions $Z_1$ and $Z_2$ is defined as
$\bar{d}_{p}\left(Z_{1}, Z_{2}\right)\equiv\sup _{s, a} d_{p}\left(Z_{1}(s, a), Z_{2}(s, a)\right)$.
\cite[Lemma 3]{c51} shows that  $\mathcal{T}^{\pi}$ is a contraction in $\bar{d}_{p}$, i.e., 
\begin{equation}
\bar{d}_{p}\left( \mathcal{T}^{\pi} Z_{1},  \mathcal{T}^{\pi} Z_{2}\right) \leq \gamma \bar{d}_{p}\left(Z_{1}, Z_{2}\right)
.
\end{equation}
The case of the distributional optimality operator $\mathcal{T}$ is more involved. In general, it is not a contraction \citep{c51}. However, based on the fact that $\mathcal{T}^\pi$ is a contraction, \cite{c51} proves that, if the optimal policy is unique, then the iterates $Z_{k+1} \leftarrow \mathcal{T}Z_k$ converge to $Z^{\pi^*}$ (in $p$-Wasserstein metric, $\forall s,a$) and, under some conditions, $\mathcal{T}$ has a unique fixed point corresponding to an optimal value distribution.  

\subsection{Finite support projection}

 Previous approaches of DRL project return distributions $Z(s,a)$ onto a space of distributions of finite support, modeled by a mixture of Diracs over $N$ support points $\theta_i(s,a), i=1..N$, i.e.,
\begin{equation}
    Z_\theta(s,a) \equiv \sum_{i=1}^N p_i(s,a) \delta_{\theta_i(s,a)} 
\end{equation}
which yields a staircase CDF %
$\sum_{i=1}^N  p_i(s,a) \ind_{z\geq \theta_i(s,a)}$.
Different approaches have been followed to parameterize these distributions depending on whether $p_i$ and $\theta_i$ are learned or fixed.
In this work, we consider $p_i$ fixed and $\theta_i$ a learned parameter.

In order to analyze how arbitrary distributions are mapped into these finite representations, different projection operators are defined as minimizers of some distance between distributions.
For instance, in \cite{qrdqn}, the 1-Wasserstein projection $\Pi_{W_{1}}$ is used
and  it is shown that the resulting projected Bellman operator remains a contraction, i.e.,
\begin{equation}
\label{eq:wasserstein-contraction}
\bar{d}_{\infty}\left(\Pi_{W_{1}} \mathcal{T}^{\pi} Z_{1}, \Pi_{W_{1}} \mathcal{T}^{\pi} Z_{2}\right) \leq \gamma \bar{d}_{\infty}\left(Z_{1}, Z_{2}\right)
.
\end{equation}

However, since Wasserstein distances suffer from biased gradients \citep{cramerGAN,c51}, the \emph{quantile regression (QR) loss} is used in practice, guaranteeing the same minimizer and enjoying unbiased gradients \citep{qrdqn}.
Given a target distribution $\TF$, the QR loss, which allows to learn the parameters $\{\theta_1,\dots,\theta_N\}$ of $F(z)\equiv \frac{1}{N}\sum_{i=1}^N  \ind_{z\geq \theta_i}$, is defined as
\begin{align}
\label{eq:qr-loss}
\lqr(F,\TF) &\equiv  \sum_{i=1}^N  \EX_{Z\sim \TF} \left[\rho_{\htaui}(Z-\theta_i)\right] \\
&\text{ with }
\rho_\tau(u) \equiv u(\tau - \ind_{u<0})
\end{align}
where $\htaui$ are the midpoints of a uniform grid of $N$ quantile levels, i.e., $\htaui \equiv \frac{2i-1}{2N}$.
Note that this definition makes $\theta_i$ an estimate of the $\htaui$-quantile. As we shall see in the next section (cf.~Remark \ref{rmk:permutation-invariance}), this correspondence is not enforced by the Cramér projection.
Improved empirical results have been reported in \cite{qrdqn} by Huberizing the QR loss, i.e., by replacing $\rho_\tau(u)$ by
$\rho^\kappa_{\tau}(u)=\abs{\tau-\ind_{u<0}} \mathcal{L}_\kappa(u)  
$ where $\mathcal{L}_\kappa(u)$  is the \emph{Huber loss} \citep{huber1964}
\begin{equation}
\mathcal{L}_{\kappa}(u)\equiv 
\begin{cases}\frac{1}{2} u^{2}, & \text { if }|u| \leq \kappa \\ \kappa\left(|u|-\frac{1}{2} \kappa\right), & \text { otherwise }
\end{cases}.
\end{equation}

\section{CRAMÉR AND 1-WASSERSTEIN PROJECTION EQUIVALENCE}
\label{sec:equiv-wasserstein}
The $\ellp$ distance between two RV $U$ and $Y$ is the $\ell_p$ metric between their CDFs, i.e.,
$$
\ellp(U, Y)\equiv\left(\int_{-\infty}^{\infty}\left|F_{Y}(z)-F_{U}(z)\right|^{p} dz\right)^{1 / p}.
$$
The \emph{Cramér distance} corresponds to the $\ellp$ distance for $p=2$.
We now show that, given an arbitrary distribution and a grid of quantile levels, there is a staircase representation that minimizes the $\ellp$ distance, which puts the quantile values at the inverse of the quantile level midpoints. %
We first introduce an auxiliary Lemma.

\begin{restatable}[]{lem}{auxiliary}
For any $\tau,\tau'\in[0,1]$ with $\tau<\tau'$ and CDF $F$ with inverse $F^{-1}$, let $t\equiv F^{-1}(\tau)$ and $t'\equiv F^{-1}(\tau')$  and  consider the scaled and vertically shifted Heaviside step function $$H_\theta^{\tau,\tau'}(z)\equiv\tau+(\tau'-\tau)\ind_{z\geq \theta}.$$
Then, for any $p\in\R, p>1$,  the set of $\theta\in[t,t']$ minimizing
\begin{equation}
\label{eq:substructure}
\int_{t}^{t'} \lvert F(z) - H_\theta^{\tau,\tau'} \rvert^p dz
\end{equation}
is given by
\begin{equation}
\label{eq:mid-point-sol}
\left\{ \theta \in[t,t'] \vert F(\theta) = \left( \frac{\tau+\tau'}{2} \right) \right\}.
\end{equation}
If $F^{-1}$ is the inverse CDF, then $F^{-1}((\tau+\tau')/2)$ is always a valid minimizer, and if $F^{-1}$ is continuous at $(\tau+\tau')/2$, then $F^{-1}((\tau+\tau')/2)$ is the unique minimizer.
\end{restatable}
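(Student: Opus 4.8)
The plan is to restrict the entire analysis to the interval $[t,t']$ and to exploit the fact that $H_\theta^{\tau,\tau'}$ is piecewise constant, so that the only role of $\theta$ is to decide \emph{where} the integrand switches from measuring $F-\tau$ to measuring $\tau'-F$. First I would record the elementary consequences of the generalized-inverse definitions: for $z\ge t=F^{-1}(\tau)$ one has $F(z)\ge\tau$ (using right-continuity of $F$), and for $z<t'=F^{-1}(\tau')$ one has $F(z)<\tau'$; hence on $[t,t']$ we have $\tau\le F(z)\le\tau'$ and the absolute value collapses. Writing $h\equiv(\tau+\tau')/2$ and $\theta^\ast\equiv F^{-1}(h)$ --- which lies in $[t,t']$ because $F^{-1}$ is non-decreasing --- the objective \eqref{eq:substructure} becomes
\[
\Phi(\theta)=\int_t^{\theta}\bigl(F(z)-\tau\bigr)^p\,dz+\int_{\theta}^{t'}\bigl(\tau'-F(z)\bigr)^p\,dz .
\]

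Rather than differentiate $\Phi$ (which would force unnecessary continuity assumptions on $F$), I would compare $\Phi(\theta)$ with $\Phi(\theta^\ast)$ directly. Cancelling the common pieces gives, for any $\theta\in[t,t']$,
\[
\Phi(\theta)-\Phi(\theta^\ast)=\int_{\theta^\ast}^{\theta}\Bigl[\bigl(F(z)-\tau\bigr)^p-\bigl(\tau'-F(z)\bigr)^p\Bigr]\,dz ,
\]
interpreting $\int_{\theta^\ast}^{\theta}=-\int_{\theta}^{\theta^\ast}$ when $\theta<\theta^\ast$. The integrand $g(z)$ has a sign dictated entirely by the position of $F(z)$ relative to $h$: since $x\mapsto x^p$ is strictly increasing on $[0,\infty)$ for $p>1$ and both $F(z)-\tau$ and $\tau'-F(z)$ are non-negative on $[t,t']$, we have $g(z)\ge0\iff F(z)\ge h$ and $g(z)<0\iff F(z)<h$. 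The defining property of $\theta^\ast=F^{-1}(h)$ is precisely that $F(z)\ge h$ for $z>\theta^\ast$ and $F(z)<h$ for $z<\theta^\ast$. Hence the integrand is $\ge0$ over the range of integration whenever $\theta>\theta^\ast$ and $\le0$ whenever $\theta<\theta^\ast$, so in both cases $\Phi(\theta)\ge\Phi(\theta^\ast)$. This establishes that $F^{-1}(h)$ is always a minimizer.

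To identify the full minimizing set and the uniqueness statement, I would read off when equality holds above. Since $g$ vanishes exactly where $F(z)=h$, we get $\Phi(\theta)=\Phi(\theta^\ast)$ iff $F=h$ almost everywhere between $\theta^\ast$ and $\theta$; by monotonicity and right-continuity of $F$ this pins the minimizers to the level set $\{\theta\in[t,t']:F(\theta)=h\}$ of \eqref{eq:mid-point-sol}. For uniqueness I would argue contrapositively: a second minimizer can only arise from a non-degenerate interval on which $F\equiv h$, i.e.\ a flat of $F$ at height $h$; but such a flat is exactly a jump of $F^{-1}$ at $h$, so if $F^{-1}$ is continuous at $h$ no such interval exists and $\theta^\ast$ is the unique minimizer.

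The step I expect to require the most care is the bookkeeping around the generalized inverse at the two types of degeneracy, and in particular reconciling \eqref{eq:mid-point-sol} with the robustness claim. When $F$ jumps across $h$, the level set $\{F(\theta)=h\}$ is empty, yet $F^{-1}(h)$ still sits at the jump location and the direct comparison shows it minimizes $\Phi$; this is exactly the case in which $F^{-1}$ is continuous at $h$, consistent with uniqueness. Conversely, a flat of $F$ at $h$ yields a whole interval of minimizers and a discontinuous $F^{-1}$. I would therefore state the characterization via the level set for the generic (continuous-$F$) situation and lean on the direct comparison argument, which never differentiates $F$, to cover the jump and flat cases uniformly.
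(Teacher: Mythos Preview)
Your proposal is correct and rests on the same decomposition
\[
\Phi(\theta)=\int_t^{\theta}(F(z)-\tau)^p\,dz+\int_{\theta}^{t'}(\tau'-F(z))^p\,dz
\]
and the same sign analysis of $g(z)=(F(z)-\tau)^p-(\tau'-F(z))^p$ that the paper uses. The one genuine difference is methodological: the paper \emph{differentiates} $\Phi$ in $\theta$ (invoking the second fundamental theorem of calculus, justified because the integrands are bounded with at most countably many discontinuities) and then reads off the minimizer from the sign of $\Phi'(\theta)=g(\theta)$, whereas you bypass differentiation and compare $\Phi(\theta)$ with $\Phi(\theta^\ast)$ directly by writing their difference as $\int_{\theta^\ast}^{\theta}g(z)\,dz$. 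Your route is marginally more elementary---it never needs $\Phi$ to be differentiable and so handles jumps of $F$ uniformly without the separate case discussion the paper adds at the end---while the paper's derivative computation is perhaps more mechanical to carry out. Substantively the two arguments are the same: both reduce to the observation that $g(z)\gtrless 0$ according as $F(z)\gtrless h$, and that $\theta^\ast=F^{-1}(h)$ is precisely the threshold separating these regimes. Your closing paragraph on the two degeneracies (jump across $h$ giving an empty level set but a unique minimizer; flat at $h$ giving an interval of minimizers and a discontinuous $F^{-1}$) matches the paper's treatment.
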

\begin{proof}
A visual intuition of the proof is shown in Fig.~\ref{fig:midpoint-intuition}. See Appendix \ref{app:proofs} for details.
\end{proof}

\begin{restatable}[]{thm}{thmminimizer}
\label{thm:equiv-minimizer}
Given $p_i\geq 0, i=1..N$ such that $\sum_i p_i =1$, the $\ell_p$ distance between $F$ and a mixture of Heaviside step functions $F_N(z)=\sum_{i=1}^N p_i \ind_{z\geq \theta_i}$  is minimized with $\theta_i= F^{-1}((\tau_i+\tau_{i-1})/2)$  where  $\tau_i$ are the quantile levels $\tau_i = \sum_{j=1}^i p_j$ and $F^{-1}$ is the inverse CDF.   
\end{restatable}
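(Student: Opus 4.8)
The plan is to minimize the $p$-th power $\ellpp(F,F_N)=\int_{-\infty}^{\infty}\abs{F(z)-F_N(z)}^p\,dz$, which is monotone in and hence has the same minimizer as $\ellp(F,F_N)$, and to exploit the additivity of this integral over the partition of $\R$ induced by the target quantiles. Concretely, I would set $t_i\equiv F^{-1}(\tau_i)$ for $i=0,\dots,N$ (with $t_0=-\infty$, $t_N=+\infty$) and bands $I_i\equiv[t_{i-1},t_i]$, so that $F$ sweeps the level range $[\tau_{i-1},\tau_i]$ exactly as $z$ traverses $I_i$. The proposed solution $\theta_i=F^{-1}((\tau_{i-1}+\tau_i)/2)$ satisfies $\theta_i\in I_i$, since $\tau_{i-1}\le(\tau_{i-1}+\tau_i)/2\le\tau_i$ and $F^{-1}$ is nondecreasing, and it is automatically sorted, $\theta_1\le\dots\le\theta_N$.

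The core of the argument is a decomposition. When the support points are sorted and each $\theta_i$ lies in its own band $I_i$, the restriction of the staircase to $I_i$ collapses to a single scaled-and-shifted Heaviside: the steps $\theta_j$ with $j<i$ have all fired (contributing total mass $\tau_{i-1}$) while those with $j>i$ have not, so that $F_N(z)=\tau_{i-1}+p_i\ind_{z\ge\theta_i}=H_{\theta_i}^{\tau_{i-1},\tau_i}(z)$ on $I_i$. Hence $\ellpp(F,F_N)=\sum_{i=1}^N\int_{I_i}\abs{F(z)-H_{\theta_i}^{\tau_{i-1},\tau_i}(z)}^p\,dz$, a sum whose $i$-th term depends only on $\theta_i$. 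Applying the auxiliary Lemma to each term with $(\tau,\tau')=(\tau_{i-1},\tau_i)$ shows it is minimized exactly at $F(\theta_i)=(\tau_{i-1}+\tau_i)/2$, i.e.\ at $\theta_i=F^{-1}((\tau_{i-1}+\tau_i)/2)$; summing these independent minimizations yields the claimed configuration and the candidate value $V^\star\equiv\sum_i m_i$, where $m_i\equiv\min_{\theta\in I_i}\int_{I_i}\abs{F-H_{\theta}^{\tau_{i-1},\tau_i}}^p\,dz$.

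It then remains to certify that $V^\star$ is a genuine global lower bound, i.e.\ that $\ellpp(F,F_N)\ge V^\star$ for \emph{every} $\theta\in\R^N$, which together with $\ellpp(F,F_N^\star)=V^\star$ proves global optimality directly. The main obstacle lives here: for unequal $p_i$ the map $\theta\mapsto F_N$ is not invariant under permutations of the support points, so an arbitrary $\theta$ need neither be sorted nor place its steps in the matching bands, and then the clean band-by-band collapse fails. I would close this gap with a rearrangement/exchange argument in two stages: first, that reordering the support points into increasing order does not increase $\ellpp$ (to track the nondecreasing target $F$, the cumulative levels of the staircase should increase through the natural sequence $0=\tau_0\le\tau_1\le\dots\le\tau_N=1$); and second, that among sorted configurations, clamping any step lying outside $I_i$ back into its band cannot increase the objective. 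After this reduction the decomposition of the previous paragraph applies verbatim and each term is bounded below by $m_i$, giving $\ellpp\ge V^\star$ with equality at the proposed $\theta^\star$. I expect the second stage --- showing the clamping is objective-nonincreasing while respecting the global constraint that the jump sizes are the prescribed $p_i$ --- to be the delicate, genuinely combinatorial part.

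Two technical points round out the proof. The two semi-infinite bands $I_1=(-\infty,t_1]$ and $I_N=[t_{N-1},+\infty)$ call for a limiting version of the Lemma (with $\tau=0$ or $\tau'=1$); the corresponding integrals converge whenever $\ellp(F,F_N)$ is finite, since the integrand vanishes as $z\to\pm\infty$. Finally, the "always a valid minimizer, and unique if $F^{-1}$ is continuous" clause is inherited directly from the analogous clause of the auxiliary Lemma applied band-by-band.
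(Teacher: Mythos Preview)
Your outline matches the paper's proof: reduce to sorted configurations with each $\theta_i\in I_i=[t_{i-1},t_i]$, then decompose $\ellpp(F,F_N)=\sum_i\int_{I_i}\lvert F-H_{\theta_i}^{\tau_{i-1},\tau_i}\rvert^p\,dz$ and apply the auxiliary Lemma termwise. The step you flag as ``the delicate, genuinely combinatorial part''---clamping a sorted $\theta$ into its bands without increasing the objective---is precisely what the paper supplies, and it is not combinatorial at all but a one-line monotonicity carried by induction: assuming $\theta_{n-1}\le t_{n-1}$ and $\theta_n>t_n$, the contribution
\[
\int_{t_n}^{\theta_{n+1}}\lvert F-F_N\rvert^p\,dz
=\int_{t_n}^{\theta_n}(F(z)-\tau_{n-1})^p\,dz+\int_{\theta_n}^{\theta_{n+1}}(F(z)-\tau_n)^p\,dz
\]
strictly decreases as $\theta_n$ slides down to $t_n$, since $(F(z)-\tau_{n-1})^p>(F(z)-\tau_n)^p$ there; a symmetric sweep from $\theta_N$ backward gives $\theta_n\ge t_{n-1}$. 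So the piece missing from your sketch is short, and your expectation that stage two is the hard part is somewhat misplaced.

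A caution about your first stage. For unequal $p_i$, relabelling the $\theta_i$ changes $F_N$, and sorting can strictly \emph{increase} $\ellpp$: with $p_1=0.9,\ p_2=0.1$ and target $F=0.1\,\ind_{z\ge 0}+0.9\,\ind_{z\ge 1}$, the unsorted $(\theta_1,\theta_2)=(1,0)$ reproduces $F$ exactly, whereas every sorted choice (including the theorem's $\theta_1=\theta_2=1$) has positive loss. So the rearrangement claim you propose is false in general. The paper sidesteps this with a bare ``without loss of generality $\theta_1^\star\le\dots\le\theta_N^\star$'', which is likewise only justified in the uniform case $p_i=1/N$ (where $F_N$ is permutation-invariant in the $\theta_i$). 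This is therefore a shared soft spot in the general-$p_i$ statement rather than a divergence between your argument and the paper's; for the uniform case used everywhere else in the paper, both proofs go through.
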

\begin{proof}

Let $t_i\equiv F^{-1}(\tau_i)$. We first prove that an optimal $\theta^\star$ satisfies $t_{i-1}\leq\theta^\star_i\leq t_i$. See Fig.~\ref{fig:intuition} for an intuition. 

\begin{figure}
\centering
\includegraphics[width=7cm]{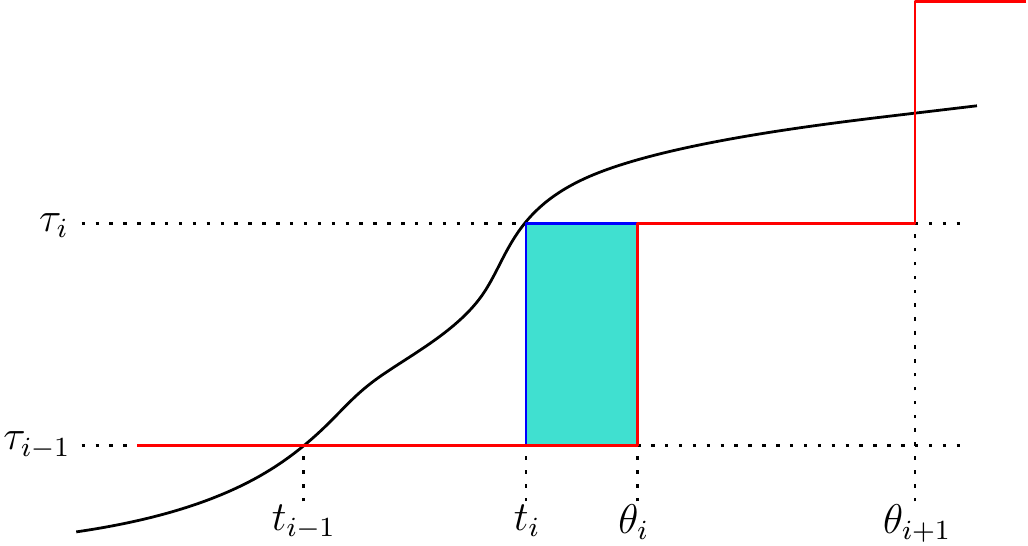}
\includegraphics[width=7cm]{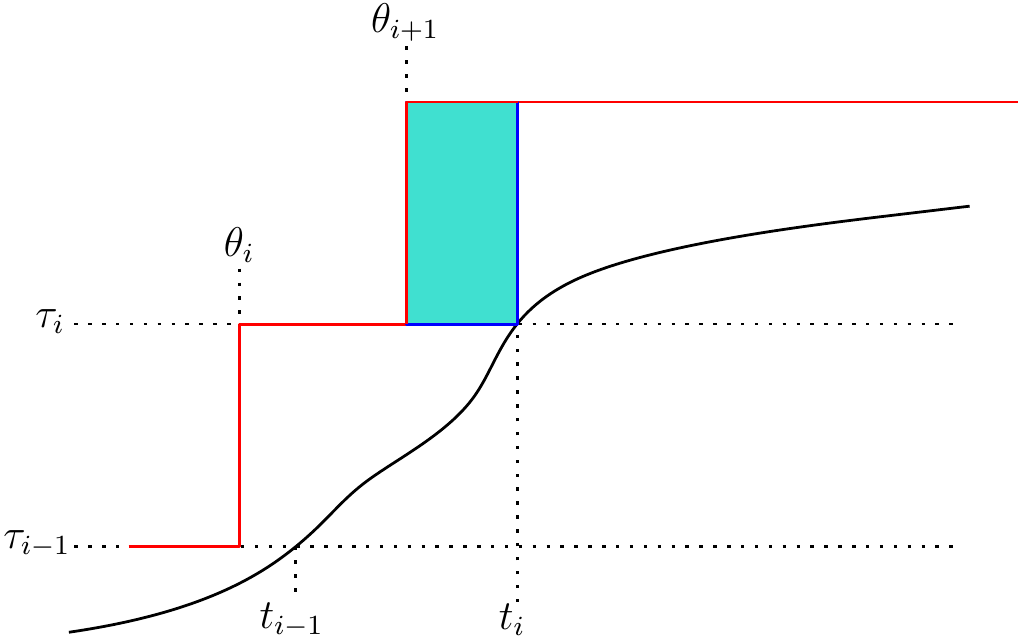}

	\caption{{\bf Intuition for proving $t_{i-1}\leq\theta^\star_i\leq t_i$.} The $\ellp$ distance can be decreased by moving $\theta_i$ to $t_i$, in the first situation, and $\theta_{i+1}$ to $t_i$, in the second one. The shaded area represents the decrease for $p=1$.}
	\label{fig:intuition} 
\end{figure}

Without loss of generality, we assume that $\theta^\star_1\leq\dots\leq\theta^\star_N$.
Let us suppose that there is an optimal $F_N$ with $\theta_1 \geq t_1$.
We can write the $p$-th power of the $\ellp$ distance as
\begin{align}
&\ellpp(F,F_N) = \int_{-\infty}^{t_1} \lvert F(z) -F_N(z) \rvert^p dz 
 \\
&+\int_{t_1}^{\theta_{2}} \lvert F(z) -F_N(z) \rvert^p dz 
+ \int_{\theta_{2}}^\infty \lvert F(z) -F_N(z) \rvert^p dz \nonumber
\end{align}
The value of the middle term strictly decreases when $\theta_1$ decreases toward $t_1$ (while the other terms are unaffected) since
\begin{align}
    &\int_{t_1}^{\theta_{2}} \lvert F(z) -F_N(z) \rvert^p dz 
    = \int_{t_1}^{\theta_{2}} \lvert F(z) -H_{\theta_1}^{0,\tau_1}(z) \rvert^p dz \nonumber \\
    &= \int_{t_1}^{\theta_{1}} F(z)^p dz + \int_{\theta_1}^{\theta_{2}} (F(z) - \tau_1)^p dz
\end{align}
and $F(z)^p>(F(z) - \tau_1)^p$.
In consequence $\theta_1 = t_1$ ; 
It proves that no optimal exist for $\theta_1 > t_1$, and thus that we have $\theta_1 \leq t_1$.

By induction, we assume that $\theta^\star_{n-1} \leq t_{n-1}$.
As before, we suppose, that there is an optimal $F_N$ with $\theta_n \geq t_n$ and we observe that the value of the term 
\begin{align}
    &\int_{t_n}^{\theta_{n+1}} \lvert F(z) -F_N(z) \rvert^p dz  \\
    &= \int_{t_n}^{\theta_{n+1}} \lvert F(z) -H_{\theta_n}^{\tau_{n-1},\tau_{n}}(z) \rvert^p dz  \\
    &= \int_{t_n}^{\theta_n} (F(z)-\tau_{n-1}) ^p dz + \int_{\theta_{n}}^{\theta_{n+1}} (F(z) - \tau_{n})^p dz \nonumber
\end{align}
strictly decreases when $\theta_n$ decreases toward $t_n$ since $(F(z)-\tau_{n-1})^p>(F(z)-\tau_{n})^p$.
In consequence $\theta_n = t_n$ ; 
it proves that no optimal exist for $\theta_n > t_n$, and thus that we have $\theta_n \leq t_n  \forall n\in\{1..N\}$.
Analogously, starting by $\theta_N$ and going backwards, we can prove that $\theta_n\geq t_{n-1} \forall n\in\{1..N\}$. 
This allows us to show that the optimization problem has an optimal substructure and thus it amounts to solving independent minimization problems of the form \eqref{eq:substructure}, i.e., 
\begin{align}
&\min_{\theta_1,\dots,\theta_N} \ellpp(F,F_N) = \min_{\theta_1,\dots,\theta_N} \sum_{i=1}^N  \int_{t_{i-1}}^{t_i} \lvert F(z) -F_N(z) \rvert^p dz \nonumber\\
&= \sum_{i=1}^N \min_{\theta_i} \int_{t_{i-1}}^{t_i} \lvert F(z) -H_{\theta_i}^{\tau_{i-1},\tau_{i}}(z) \rvert^p dz 
\end{align}
with $t_0\equiv-\infty$. 
\end{proof}

\begin{figure}
    \centering
    \includegraphics[width=.48\columnwidth]{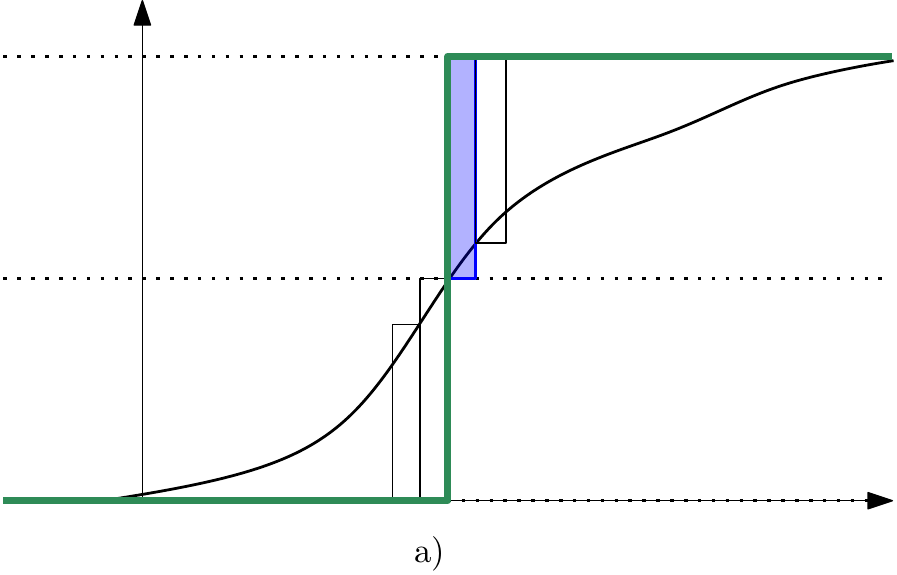}
    \includegraphics[width=.48\columnwidth]{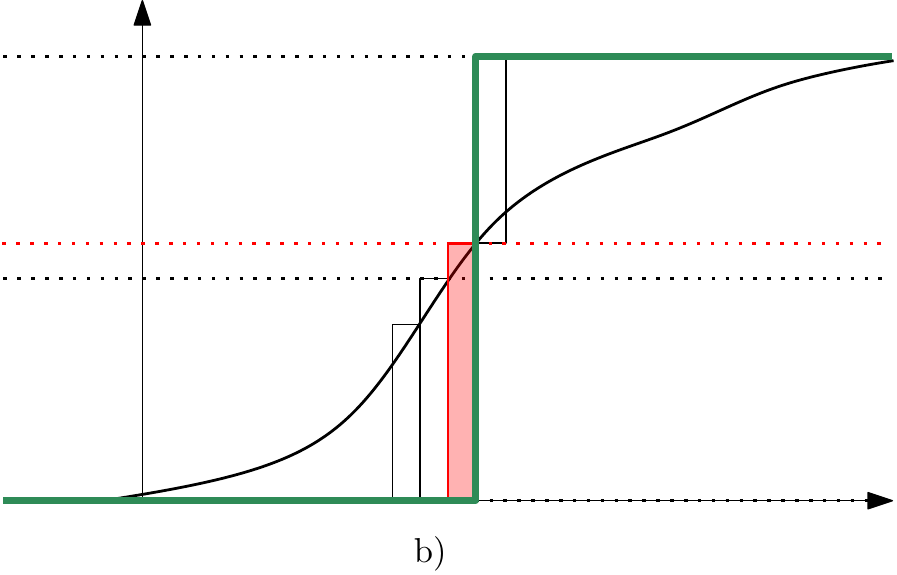}

    \caption{{\bf Midpoint minimizer.} a) The black curve is approximated by one step function (in green) located at the inverse of the mid-point. The rectangles represent an approximation of the $\ellp$ distance. b) If we move the step function to the right, the blue rectangle will be replaced by the larger red one. }
    \label{fig:midpoint-intuition}
\end{figure}

\begin{remark}\label{rmk:permutation-invariance}
For simplicity, we chose $\theta_i= F^{-1}((\tau_i+\tau_{i-1})/2)$, however any permutation $\sigma$ in the symmetric group of size $N$ makes $\tilde{\theta}_i\equiv\theta_{\sigma(i)}$ a minimizer too. %
\end{remark}

We define the $\ellp$ projection of an arbitrary CDF $F$ with inverse CDF $F^{-1}$ onto a grid of quantile levels as 
\begin{equation}
\pip F \equiv F^\star_N(z)=\sum_{i=1}^N p_i \ind_{z\geq \theta^\star_i}    \end{equation}
with $\theta^\star_i= F^{-1}((\tau_i+\tau_{i-1})/2)$. Therefore, it is equivalent to the 1-Wasserstein projection and to QR loss minimization \citep[Lemma 2]{qrdqn}, which implies the following corollary.

\begin{restatable}[]{cor}{contraction}
The Cramér projected distributional Bellman operator is a contraction in $\bar{d}_{\infty}$ i.e.
\begin{equation}
\label{eq:cramer-contraction}
\bar{d}_{\infty}\left(\Pi_{\ellp} \mathcal{T}^{\pi} Z_{1}, \Pi_{\ellp} \mathcal{T}^{\pi} Z_{2}\right) \leq \gamma \bar{d}_{\infty}\left(Z_{1}, Z_{2}\right)
.
\end{equation}
\end{restatable}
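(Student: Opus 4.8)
The plan is to reduce the statement entirely to the projection equivalence just established, so that the known 1-Wasserstein contraction \eqref{eq:wasserstein-contraction} transfers verbatim. The central observation is that $\pip$ and $\Pi_{W_1}$ are not merely two operators that happen to share a minimal distance value, but the \emph{same} operator: both map an arbitrary CDF $F$ to the staircase CDF $\sum_{i=1}^N p_i \ind_{z\geq \theta^\star_i}$ with $\theta^\star_i = F^{-1}((\tau_i+\tau_{i-1})/2)$. For $\pip$ this is exactly the content of Theorem \ref{thm:equiv-minimizer} together with the definition of $\pip$ given above; for $\Pi_{W_1}$ it is Lemma 2 of \cite{qrdqn}. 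Hence $\pip = \Pi_{W_1}$ as maps on distributions.

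First I would make this identification precise. Because the definition of $\pip F$ fixes the canonical midpoint assignment $\theta^\star_i = F^{-1}((\tau_i+\tau_{i-1})/2)$, the permutation ambiguity of Remark \ref{rmk:permutation-invariance} is resolved and the output distribution is uniquely determined and identical to the 1-Wasserstein projection. Applied statewise, $\pip$ and $\Pi_{W_1}$ therefore agree on every indexed collection $Z(s,a)$, so that $\pip \mathcal{T}^{\pi} = \Pi_{W_1}\mathcal{T}^{\pi}$ as operators on collections of distributions. It then suffices to substitute this identity into \eqref{eq:wasserstein-contraction}: replacing $\Pi_{W_1}$ by $\pip$ on both sides yields \eqref{eq:cramer-contraction} directly, with the same contraction factor $\gamma$ and the same metric, since that bound is already phrased in $\bar{d}_\infty$ and no fresh estimate is required.

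I do not expect any genuine obstacle here; the corollary is immediate once the operator identity is stated. The only point that warrants care is the one flagged above, namely that the two projections must be argued to coincide as operators (identical output distributions), not merely to attain their respective minima over the same parametric family of staircase CDFs. Since the paper's definition of $\pip$ already pins down the midpoint quantile assignment, this identification is unambiguous, and the contraction follows as a direct consequence.
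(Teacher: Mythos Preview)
Your proposal is correct and follows essentially the same route as the paper: the paper's proof is a one-line appeal to Theorem~\ref{thm:equiv-minimizer} together with the known contraction~\eqref{eq:wasserstein-contraction}, and your argument is exactly this reduction, spelled out with the additional (and appropriate) care that $\pip$ and $\Pi_{W_1}$ coincide as operators rather than merely sharing a minimum.
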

\begin{proof}
It follows directly from Eq.~\eqref{eq:wasserstein-contraction} \cite[Lemma 3]{c51} and Theorem \ref{thm:equiv-minimizer}.
\end{proof}

\section{CRAMÉR AND QR LOSS OPTIMIZATION EQUIVALENCE}
\label{sec:equiv-qrloss}

In order to put in evidence the relationship between the gradients of the QR loss and the squared Cramér distance---which we refer to as \emph{Cramér loss}---, we first present an alternative formula for the latter.

\begin{restatable}[]{lem}{lemmaaltexpression}
\label{lem:constructivism}
Given two staircase distributions $F(z)=\frac{1}{N}\sum_{i=1}^N  \ind_{z\geq \theta_i}$ and $\TF(z)=\frac{1}{N}\sum_{i=1}^N  \ind_{z\geq \Ttheta_i}$ such that $\theta_1\leq\dots\leq\theta_N$ and $\Ttheta_1\leq\dots\leq\Ttheta_N$.
Let $\uij\equiv \Ttheta_j -\theta_i$ and 
$\dij\equiv \ind_{\uij<0}$.
The squared Cramér distance between the distributions can be expressed as 
\begin{align}
\label{eq:alt-cramer}
&\int_{-\infty}^\infty (F(z) - \TF(z))^2 dz =\\
&\frac{1}{N^2}  \sum_{i=1}^N\left(  \auii +  \sum_{j=i+1}^N  \dij 2\auij +   \sum_{j=1}^{i-1}   (1-\dij) 2\auij  \right)
. \nonumber
\end{align}
\end{restatable}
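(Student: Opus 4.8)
The plan is to expand the squared Cramér distance into a double sum of elementary ``bump'' integrals and evaluate each one using the two sortedness hypotheses. First I would regroup the two empirical CDFs term by term: since both mixtures carry exactly $N$ atoms, $F(z)-\TF(z)=\frac{1}{N}\sum_{i=1}^N h_i(z)$ with $h_i(z)\equiv\ind_{z\geq\theta_i}-\ind_{z\geq\Ttheta_i}$. Each $h_i$ equals $\pm1$ on the interval between $\theta_i$ and $\Ttheta_i$ and vanishes outside it, so $F-\TF$ has compact support, is square-integrable, and the expansion $\int_{-\infty}^\infty(F-\TF)^2\,dz=\frac{1}{N^2}\sum_{i,k}\int h_i h_k\,dz$ is legitimate. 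Note the pairing $\theta_i\leftrightarrow\Ttheta_i$ is merely a convenient regrouping of the finite sum, not an optimality claim.

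Next I would separate the diagonal and off-diagonal contributions. The diagonal term $\int h_i^2\,dz$ is the length of the interval on which $h_i=\pm1$, i.e.\ $|\theta_i-\Ttheta_i|=\auii$, which produces the $\sum_i\auii$ summand. For the off-diagonal part I would collect the symmetric contributions of $(i,k)$ and $(k,i)$ into $2\int h_i h_k\,dz$ for $i<k$, and this is where sortedness is essential: because $\theta_i\leq\theta_k$ and $\Ttheta_i\leq\Ttheta_k$, the relative position of the two bumps is tightly constrained. A short case analysis on the signs of $h_i$ and $h_k$ (equivalently, on which atom pairs are inverted) shows that the signed overlap collapses, and I expect to obtain $2\int h_i h_k\,dz = 2\,\delta_{ik}\,\abs{u_{ik}} + 2\,(1-\delta_{ki})\,\abs{u_{ki}}$, the first term surviving only when $\Ttheta_k<\theta_i$ and the second only when $\Ttheta_i\geq\theta_k$. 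An equivalent, computation-only route is to establish the bilinear identity $\int h_i h_k\,dz=\tfrac12\bigl(|\theta_i-\Ttheta_k|+|\Ttheta_i-\theta_k|-|\theta_i-\theta_k|-|\Ttheta_i-\Ttheta_k|\bigr)$ and then remove the last two absolute values using the sorted orders.

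Finally I would reassemble and reindex. The off-diagonal contribution $\sum_{i<k}\bigl(2\,\delta_{ik}\abs{u_{ik}}+2\,(1-\delta_{ki})\abs{u_{ki}}\bigr)$ splits into two pieces: the first is already $\sum_{i<j}2\,\dij\auij$, matching the inner sum $\sum_{j>i}\dij 2\auij$; the second, after swapping the roles of the two indices so that the larger index comes first, becomes $\sum_{i>j}2(1-\dij)\auij$, matching $\sum_{j<i}(1-\dij)2\auij$. Together with the diagonal $\auii$ terms and the factor $1/N^2$, this is exactly the claimed expression \eqref{eq:alt-cramer}.

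The step I expect to be the main obstacle is the off-diagonal evaluation. One must use sortedness carefully to determine which of the four sign configurations of $(h_i,h_k)$ actually yields a nonempty overlap, to track the sign of the product on that overlap, and to check that the degenerate cases (empty overlaps and coincident atoms) contribute zero, so that the indicator coefficients $\dij$ emerge precisely as stated. By comparison, the regrouping and integrability, the diagonal terms, and the final relabelling are routine.
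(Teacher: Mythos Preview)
Your proposal is correct and takes a genuinely different route from the paper. The paper proves the identity by a geometric ``tiling'' argument: it covers the region between the two CDFs with axis-parallel rectangles anchored at quantile pairs, introduces a tiling operator $\T_h$ indexed by rectangle height, proves by induction on $h$ that $\sum_{d\leq N}\T_d$ reproduces the squared Cram\'er distance, and only then reads off the coefficients $\dij$ by identifying which pairs $(\theta_i,\Ttheta_j)$ span a rectangle of each height. Your approach instead writes $F-\TF=\tfrac1N\sum_i h_i$, expands the square, evaluates the cross terms via the bilinear overlap identity $\int h_i h_k=\tfrac12(|u_{ik}|+|u_{ki}|-|\theta_i-\theta_k|-|\Ttheta_i-\Ttheta_k|)$, and uses sortedness to strip the last two absolute values, after which $|x|-x=2|x|\ind_{x<0}$ and $|x|+x=2|x|(1-\ind_{x<0})$ immediately deliver the $\dij$ coefficients. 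Your argument is shorter and more algebraic; the paper's is more pictorial and emphasizes the rectangle interpretation (which it later reuses informally). Either way the key structural fact is the same---the off-diagonal contribution of the pair $(i,k)$ with $i<k$ reduces to $\delta_{ik}|u_{ik}|+(1-\delta_{ki})|u_{ki}|$---but you reach it with less machinery.
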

\begin{proof}
We compute the squared Cramér distance in a constructive way.
The idea is to cover the area between the two curves with rectangular tiles as in Fig.~\ref{fig:constructivism} to compute the integral by pieces.  A tile of height $i/N$ and width $u$ corresponds to the term $u(i/N)^2$.
We start from %
a) and replace parts of tiles to arrive to b).

\begin{figure}
\centering
\includegraphics[width=.95\columnwidth]{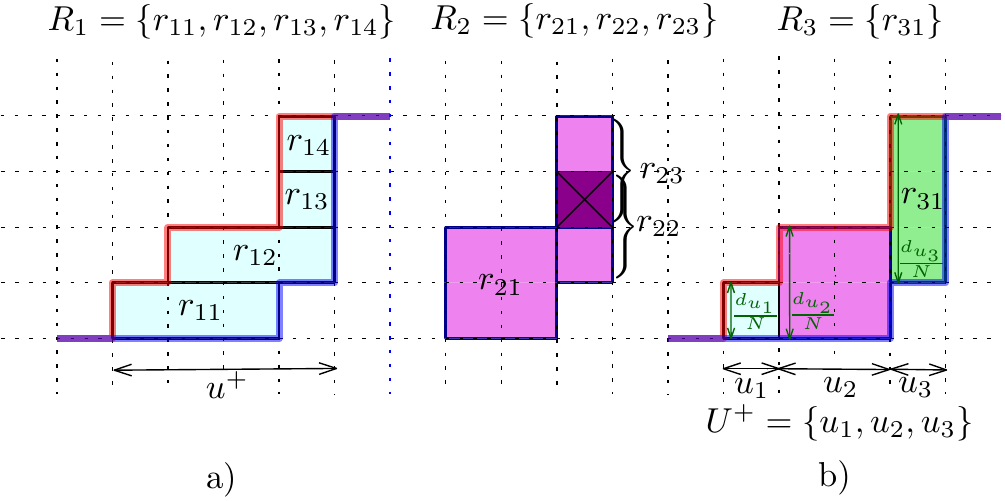}
	\caption{{\bf Computing the Cramér distance between $\TF$ (red) and $F$ (blue) with the tiling operator $T$.} {a) starting point represents $\T_1=\frac{1}{N^2}\sum_{r\in R_1} u_r $}.
	b) ending point represents the squared Cramér distance $\frac{1}{N^2} \left( u_1 1^2 + u_2 2^2 + u_3 3^2\right)$, where $u_i$ is the width of each rectangles in b). Only the leftmost part of $r_{11}$ remains in b), the rest has been replaced by taller rectangles occupying the whole height $\frac{d_{u_i}}{N}$. The middle diagram illustrates the effect of the tiling operator $\T_2$ yielding the final rectangle in the middle and, on the right, two overlapping rectangles---that need to be replaced by a taller one---and an oversubstracted rectangle (with a cross). The result of $\T_1+\T_2+\T_3$ is shown in b), a rectangle of height $3/N$ has been added, the two overlapping rectangles have been removed and the oversubstracted rectangle has been added back. }
	\label{fig:constructivism} 
\end{figure}

First, we formally define our tiling operator $T$. Second, we show that it is well built: the sum of the tiles given by $T$ %
is equal to the squared Cramér distance between the two curves. Third, we derive Eq.~\eqref{eq:alt-cramer} by using that tiling operator.

{\bf (Tiling operator)} First consider an interval $u^+\equiv[t_1,t_2]$ such that $\TF(t_1)=F(t_1)$, $\TF(t_2)=F(t_2)$ and $\TF(z) > F(z) \;\forall z\in(t_1,t_2)$.
Let us define the tiling operator $\T_h$ for  $h\geq1$  
\begin{align}
\label{eq:tiling-op}
    &\T_h(F,\TF, u^+) \nonumber\\ %
&\equiv\frac{1}{N^2}\sum_{r\in R_h} u_r \left( h^2 - 2(h-1)^2 + \ind_{h>1} (h-2)^2 \right)  \\
\label{eq:tiling-op2}
    &=\frac{1}{N^2}\sum_{r\in R_h} u_r(2 - \ind_{h=1})  
\end{align}
where $u_r$ is the width of a rectangle $r$ and $R_h$ is the set of rectangles of height $h/N$ whose upper left and lower right angles are aligned with quantiles of, respectively, $\TF$ and $F$ lying in $u^+$; more formally, 
$R_h\equiv\big\{ r: r \text{ is an axis-parallel rectangle with opposite}\allowbreak\text{corners coordinates }(\theta_i,i/N)\text{ and }(\Ttheta_j,j/N)\;\forall i,j \text{ s.t.}\allowbreak \theta_i,\Ttheta_j\in u^+,  j-i=h \text{ and } \theta_i > \Ttheta_j  \big\}$. Note that these rectangles lie completely within the difference area since $F$ and $\TF$ are monotonically increasing. Note that $\T_1$ corresponds to the initial step depicted in Fig.~\ref{fig:constructivism} a). Intuitively, for $h>1$, Eq.~\eqref{eq:tiling-op} represents the fact that the operator $\T_h$  replaces parts of width $u_r$ of two tiles of height $(h-1)/N$ by a tile of height $h/N$ and width $u_r$ and fixes oversubstracted tiles of the step $h-2$.

{\bf (Soundness)} Let $\T^h (F,\TF, u^+) \equiv \sum_{d=1}^h \T_d(F,\TF, u^+)$.
We are going to express $\T^h$ as a sum over a set $U^+$ of left-closed right-open intervals constituting a partition of $u^+$, s.t.~for any $u\equiv[a,b)\in U^+$ the difference between the CDFs is constant, i.e.,
\begin{equation}
\TF(z)-F(z)=\frac{d_u}{N}>0 \; \forall z\in u,    
\end{equation}
and no quantile lies strictly within $u$, i.e., $\nexists k \text{ s.t. } \theta_k\in(a,b) \lor \Ttheta_k\in(a,b)$. See Fig.~\ref{fig:constructivism} b). We prove by induction the following property.
\begin{align}
\T^h (F,\TF, u^+) &= \label{eq:interval-decomp}\\
&\frac{1}{N^2} \sum_{u\in U^+} \abs{u}\left(\ind_{d_u\leq h}d_u^2 
+ \ind_{d_u>h}g_{u,h} \right) \nonumber
\end{align}
with $g_{u,h}\equiv (d_u-h+1)(2h-1) + (h-1)^2$.
We first express $T_h$ as a sum over $U^+$. We can rearrange the sum in Eq.~\eqref{eq:tiling-op2}, by decomposing each width $u_r$ as a sum of lengths of intervals in $U^+$ and by noting that for each $u\in U^+$ there are $\ind_{d_u\geq h}(d_u-h+1)$ rectangles in $R_h$ with non-empty projection on $u$, as follows 
\begin{align}
\T_h (F,\TF, u^+) &=\\
&\frac{1}{N^2} \sum_{u\in U^+}  \abs{u}\ind_{d_u\geq h}(d_u-h+1) (2 - \ind_{h=1}) \nonumber
\end{align}
In particular, for $h=1$, we have
\begin{equation}
\T_1 (F,\TF, u^+) = \frac{1}{N^2} \sum_{u\in U^+} \abs{u}d_u,
\end{equation}
which validates the base case since $\T^1 (F,\TF, u^+)=\T_1 (F,\TF, u^+)$ and $\ind_{d_u\leq h}d_u^2 + \ind_{d_u>h}g_{u,h}  = d_u$.
We now assume that the property \eqref{eq:interval-decomp} holds for $h-1$ and note that 
$g_{u,h-1}+2(d_u-h+1)=g_{u,h}$.
Then, for $h>1$, 
\begin{align}
\T^{h} (F,\TF, u^+)=&\T^{h-1} (F,\TF, u^+) + \T_h (F,\TF, u^+) \\
=\frac{1}{N^2} \sum_{u\in U^+}\abs{u} &\left( \ind_{d_u\leq h-1}d_u^2  +   \ind_{d_u>h-1} g_{u,h-1} %
\right.\\
&+\left. \ind_{d_u\geq h}2(d_u-h+1) %
\right) \notag\\
= \frac{1}{N^2} \sum_{u\in U^+} \abs{u}&\left(\ind_{d_u\leq h-1}d_u^2 + \ind_{d_u\geq h}
g_{u,h} %
\right) \\
=\frac{1}{N^2} \sum_{u\in U^+} \abs{u}&\left(\ind_{d_u\leq h}d_u^2 + \ind_{d_u> h}
g_{u,h} %
\right)
\end{align}
since $\ind_{d_u>h-1}=\ind_{d_u\geq h}$ and  $\ind_{d_u=h}
g_{u,h} %
= \ind_{d_u=h}d_u^2$.

Since $\ind_{d_u\leq N}=1- \ind_{d_u> N} = 1$, the final tiling $\T^N (F,\TF,u^+)$  corresponds to the squared Cramér distance on the interval $u^+$, i.e.,
\begin{equation}
\T^N (F,\TF, u^+) = \frac{1}{N^2} \sum_{u\in U^+} \abs{u}d_u^2 
.
\end{equation}

{\bf (Final derivation)} Now, we are going to use \eqref{eq:tiling-op2} to get to the claimed expression. First  note that for a rectangle $r\in R_h$ with upper leftmost and lower rightmost angles corresponding, respectively, to $\Ttheta_j$ and $\theta_i$, its width is $u_r = \auij$. Since $\theta_1\leq\dots\leq\theta_N$ and $\Ttheta_1\leq\dots\leq\Ttheta_N$,
when $\TF(z)> F(z)$, each rectangle in $R_h$ corresponds to exactly one pair $(\Ttheta_j,\theta_i)$ such that $(\dij=1)\land(i\leq j)$.
By symmetry, the condition $(\dij=0)\land(j\leq i)$ allows us to consider intervals such that $\TF(z)< F(z)$. This allows to express the sum \eqref{eq:tiling-op2} as sums over indices $i,j$.   
We consider the case $i=j$ separately to avoid double counting and also because it corresponds to $h=1$.
Therefore, from \eqref{eq:tiling-op2}, we have
\begin{align}
\T^N (F,\TF,\R) = \frac{1}{N^2}& \left( \sum_{r\in R_1} u_r + \sum_{h=2}^N\sum_{r\in R_h} 2u_r \right)\\
= \frac{1}{N^2} \left( \sum_{i=1}^N \auii \right.+&\sum_{i=1}^{N-1} \sum_{j=i+1}^N  \dij 2\auij\\  
+&\left.\sum_{j=1}^{N-1} \sum_{i=j+1}^N  (1-\dij) 2\auij \right)
.
\end{align}
By rearranging the sums,
we get  Equation \eqref{eq:alt-cramer}.
\end{proof}

\begin{restatable}[]{cor}{corollarygradients}
\label{cor:col-grad}
Under the conditions of Lemma \ref{lem:constructivism},
\begin{align}
&\frac{\partial \lqr(F,\TF) }{\partial \theta_i} =  \frac{1}{N}\left(\frac{1-2i}{2} + \sum_{j=1}^N \delta_{ij} \right) \\
&\frac{\partial \ell_2^2(F,\TF) }{\partial \theta_i} =  \frac{1}{N^2} \left(1-2i + 2\sum_{j=1}^N \delta_{ij} \right)
.
\end{align}
Therefore, their gradients are collinear, i.e.
\begin{equation}
\nabla_\vtheta \lqr  = \frac{N}{2} \nabla_\vtheta \ell_2^2
.
\end{equation}
\end{restatable}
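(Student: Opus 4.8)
The plan is to compute the two partial derivatives separately and then observe that they share a common factor. I would treat the QR loss first, since it is the more elementary computation.

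\emph{QR loss.} Because $\TF$ places mass $1/N$ at each atom $\Ttheta_j$, I would first expand the expectation into a finite double sum, $\lqr(F,\TF)=\frac{1}{N}\sum_{i=1}^N\sum_{j=1}^N \rho_{\htaui}(\uij)$ with $\uij=\Ttheta_j-\theta_i$. The key elementary fact is that $\rho_\tau$ is piecewise linear with a.e.\ derivative $\rho_\tau'(u)=\tau-\ind_{u<0}$, and since $\rho_\tau$ is continuous the kink at $u=0$ contributes nothing (a subgradient is used there, exactly as in the standard QR-loss treatment). The chain rule with $\partial\uij/\partial\theta_i=-1$ then gives $\partial\rho_{\htaui}(\uij)/\partial\theta_i=\dij-\htaui$. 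Summing over $j$ and using $\sum_{j=1}^N\htaui=N\htaui=\frac{2i-1}{2}$ produces the first claimed formula.

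\emph{Cramér loss.} Here I would start from the alternative expression of Lemma \ref{lem:constructivism}. The enabling structural observation is that $\theta_i$ appears \emph{only} inside the $i$-th outer summand, because every $\uij$ in that summand carries $\theta_i$ as its first index, so there are no cross terms and the derivative reduces to differentiating $S_i\equiv\auii+\sum_{j>i}\dij 2\auij+\sum_{j<i}(1-\dij)2\auij$. The crucial and least routine step is differentiating the indicator-weighted absolute values: although $\dij$ is an indicator with a jump, the products equal the negative and positive parts, $\dij\auij=\max(\theta_i-\Ttheta_j,0)$ and $(1-\dij)\auij=\max(\Ttheta_j-\theta_i,0)$, which are continuous, so the jump of $\dij$ multiplies a vanishing $\auij$ and contributes nothing. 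This yields $\partial(\dij\auij)/\partial\theta_i=\dij$ and $\partial((1-\dij)\auij)/\partial\theta_i=\dij-1$, together with $\partial\auii/\partial\theta_i=2\dii-1$. Collecting all the indicator contributions into $2\sum_{j=1}^N\dij$ and the constants into $-1-2(i-1)=1-2i$ gives $\partial S_i/\partial\theta_i=1-2i+2\sum_{j=1}^N\dij$, hence the second formula after the $1/N^2$ factor.

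\emph{Collinearity.} Finally I would rewrite the QR derivative as $\frac{1}{2N}\big(1-2i+2\sum_{j=1}^N\dij\big)$ and compare it with the Cramér derivative $\frac{1}{N^2}\big(1-2i+2\sum_{j=1}^N\dij\big)$; the two agree up to the scalar $\frac{1}{2N}=\frac{N}{2}\cdot\frac{1}{N^2}$, which holds componentwise for every $i$ and therefore gives $\nabla_\vtheta\lqr=\frac{N}{2}\nabla_\vtheta\ell_2^2$. The main obstacle is the correct differentiation of the $\dij$-weighted terms; everything else is routine once one notes that these products are positive/negative parts, so the atomic derivatives of the indicators vanish and the derivatives exist almost everywhere (with a subgradient at the finitely many points where some $\uij=0$).
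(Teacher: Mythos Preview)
Your proposal is correct and follows essentially the same route as the paper: expand the QR loss as a finite double sum and differentiate $\rho_{\htaui}$, then differentiate the $i$-th summand of the alternative Cram\'er expression after rewriting $\dij\auij$, $(1-\dij)\auij$, and $\auii$ in a form where the indicator's jump is harmless. Your framing via positive/negative parts ($\max$ functions) is just a slightly different packaging of the paper's identities $\dij\auij=\dij(\theta_i-\Ttheta_j)$ and $(1-\dij)\auij=(1-\dij)(\Ttheta_j-\theta_i)$, and your explicit remark about a.e.\ differentiability/subgradients at kinks is a welcome bit of extra care that the paper leaves implicit.
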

\begin{proof}
For a target distribution $\TF(z)=\frac{1}{N}\sum_{i=1}^N  \ind_{z\geq \Ttheta_i}$, 
the quantile regression loss can be expressed as
\begin{align}
\lqr(F,\TF) &= \sum_{i=1}^N \frac{1}{N} \sum_{j=1}^N \rho_{\htaui}(\Ttheta_j-\theta_i) \\
&= \frac{1}{N} \sum_{i=1}^N \sum_{j=1}^N  (\Ttheta_j-\theta_i)(\htaui - \dij)
\end{align}
and thus
\begin{align}
\frac{\partial \lqr(F,\TF) }{\partial \theta_i}  &= \frac{1}{N} \sum_{j=1}^N  (\dij - \htaui ) \\
&= \frac{1}{N} \left( \frac{1-2i}{2} + \sum_{j=1}^N  \dij  \right)
.
\end{align}
In order to obtain the partial derivative of the squared Cramér distance, first note that 
$\dij\auij=\dij(\theta_i-\Ttheta_j)$, $(1-\dij)\auij=(1-\dij)(\Ttheta_j-\theta_i)$ and $\auii=\dii(\theta_i-\Ttheta_i)+(1-\dii)(\Ttheta_i-\theta_i)$. By replacing these quantities in \eqref{eq:alt-cramer} and taking the derivative with respect to $\theta_i$ we obtain 
\begin{align}
&\frac{\partial \ell_2^2(F,\TF) }{\partial \theta_i} \\
&= \frac{1}{N^2} \left[2\dii - 1 +  2\left( \sum_{j=i+1}^N \dij + \sum_{j=1}^{i-1} (\dij-1)  \right) \right] \nonumber\\
&= \frac{1}{N^2} \left(2\sum_{j=1}^N \dij   -1 + 2\sum_{j=1}^{i-1} (-1)  \right)\\
&= \frac{1}{N^2} \left(1-2i + 2\sum_{j=1}^N \delta_{ij} \right)
.
\end{align}
\end{proof}
\begin{remark}
\label{rmk:adam-invariance}
Therefore, gradient descent methods whose parameter updates are invariant to 
rescaling of the gradient like ADAM \cite{adam}, yield the same optimization path with both losses.
\end{remark}
\begin{remark}
Huberization of the QR loss breaks the equivalence with the Cramér loss. 
\end{remark}

\section{ALGORITHM}
\label{sec:algorithm}

Formula \eqref{eq:alt-cramer} allows to compute the squared Cramér distance between two staircase distributions $F(z)=\frac{1}{N}\sum_{i=1}^N  \ind_{z\geq \theta_i}$ and $\TF(z)=\frac{1}{N}\sum_{i=1}^N  \ind_{z\geq \Ttheta_i}$ assuming the quantiles are ordered, i.e., $\theta_1\leq\dots\leq\theta_N$ and $\Ttheta_1\leq\dots\leq\Ttheta_N$. That formula involves two nested sums making it of quadratic complexity in $N$ as the quantile regression loss. 
Alternatively, if we consider the sorted sequence of merged quantiles $\vtheta'\equiv \mathrm{sort}\left(\{\theta_i\}_{i=1..N}\bigcup\{\Ttheta_i\}_{i=1..N}\right)$, we have that $F(z) - \TF(z)$ is constant between any two consecutive quantile values of $\vtheta'$ and the difference can be obtained by accumulating the increments from $F$ and the decrements from $\TF$, see Appendix \ref{app:algo} for an illustration and a formal proof.
Therefore, we can express the Cramér loss between two staircase distributions  as follows
\begin{align}
&\int_{-\infty}^\infty (F(z) - \TF(z))^2 dz =\\
&\sum_{i=1}^{2N-1}  \left( \theta'_{i+1} - \theta'_{i}  \right)\left( \sum_{j \text{ s.t. } \theta_j\leq\theta'_i} \frac{1}{N} - \sum_{j \text{ s.t. } \Ttheta_j\leq\theta'_i} \frac{1}{N} \right)^2    \nonumber   
\end{align}
where $\theta'_{i}$ is the $i$-the element of $\vtheta'$.
Algorithm \ref{algo} implements this formula based on sorting the merged quantiles of both distributions, yielding $O(N \log N)$ complexity. Note that this algorithm does not require the input vectors $\vtheta$ and $\vTtheta$ to be ordered.
This has an important consequence on the network that outputs $\vtheta$, since it is not required to be in a particular order as for the QR loss. This permutation equivalence creates symmetries in the loss landscape (see Fig.~\ref{fig:symmetries} in Appendix \ref{algo}, for an illustration). Non-crossing architectures like \cite{nc-qrdqn,ndqfn} eliminate these symmetries by enforcing monotonicity on the output.

\begin{algorithm}
\SetAlgoLined
\DontPrintSemicolon
\KwIn{$\vtheta\equiv[\theta_1,\dots,\theta_N]$, $\vTtheta\equiv[\Ttheta_1,\dots,\Ttheta_N] $: array}
\KwOut{$\int_{-\infty}^\infty (F(z) - \TF(z))^2 dz $ 
}
\BlankLine
\Indp $\vtheta'\leftarrow\mathrm{concat}(\vtheta,\vTtheta)$\;
$i_1,\dots,i_{2N}\leftarrow\mathrm{argsort}(\vtheta') $\;
$\vtheta'\leftarrow\vtheta'[i_1,\dots,i_{2N}]$\;
$\Delta_z\leftarrow\vtheta'[1:] - \vtheta'[:\text{-}1]$\;
$\Delta_\tau\leftarrow \mathrm{concat}\left(-\frac{1}{N}\mathbf{1}_N,\frac{1}{N}\mathbf{1}_N\right)$ \;
$\Delta_\tau\leftarrow \Delta_\tau[i_1,\dots,i_{2N}]$ \;
$\Delta_\tau\leftarrow \mathrm{cumsum}\left(\Delta_\tau\right)[:\text{-}1]$ \;
$I\leftarrow \Delta_\tau * \Delta_\tau * \Delta_z $ \;
\Return $\mathrm{sum}(I)$ 
\caption{{\bf Cramér loss.} The operators $[1:]$ and $[:\text{-}1]$ remove, respectively, the first and the last elements of the array. $\mathbf{1}_N$ denotes an array of $N$ ones and $*$ denotes elementwise multiplication.
 }
\label{algo}
\end{algorithm}

\section{CRAMÉR TD-LEARNING ON SAMPLED TRANSITIONS}
\label{sec:td-learning}
In order to train a DRL agent using the Cramér loss, we extend temporal-difference (TD) learning to distributions. For this, we express distributional Bellman's equations in the language of distributions as in \cite{rowland2018analysis}.
Given a probability distribution $\nu \in$ $\mathscr{P}(\mathbb{R})$ and a measurable function $f: \mathbb{R} \rightarrow \mathbb{R}$, the \emph{push-forward measure} $f_{\#} \nu \in \mathscr{P}(\mathbb{R})$ is defined by $f_{\#} \nu(A)\equiv$ $\nu\left(f^{-1}(A)\right)$, for all Borel sets $A \subseteq \mathbb{R}$.
Let $f_{r, \gamma}(x)\equiv r + \gamma x$ and %
$\eta_{\pi}$ be the collection of return distributions for each state and action, associated with a policy $\pi$. The basis of DRL is given by the fixed point equation
$$\eta_{\pi}(s, a)=\left(\mathcal{T}^{\pi} \eta_{\pi}\right)(s, a) \quad \forall(s, a) \in \mathcal{S} \times \mathcal{A}$$
where $\mathcal{T}^{\pi}: \mathscr{P}(\mathbb{R})^{\mathcal{S} \times \mathcal{A}} \rightarrow \mathscr{P}(\mathbb{R})^{\mathcal{S} \times \mathcal{A}}$ is the distributional Bellman operator on distributions\footnote{Eq.~\eqref{eq:drl-bellman-op} is expressed in the language of random variables.}
defined as
$$\left(\mathcal{T}^{\pi} \eta\right)(s, a) 
\equiv \mathbb{E}_{r,s',a'|s,a} \left(f_{r, \gamma}\right)_{\#} {\eta} \left(s^{\prime}, a^{\prime}\right)$$
for all $\eta \in \mathscr{P}(\mathbb{R})^{\mathcal{X} \times \mathcal{A}}$. 
For Cramér-based TD-learning, we should use a parametric distribution $\eta_\theta$ and a frozen version of it that we call $\eta'$ and do stochastic gradient descent by approximating
$\mathbb{E}_{s,a} \nabla_\theta \ell_2^2\left( \eta_\theta,\mathbb{E}_{r,s',a'|s,a} \left(f_{r, \gamma}\right)_{\#} {\eta'} \left(s^{\prime}, a^{\prime}\right)  \right)$.
Let $F_{\theta}$ and $F_{r,s',a'}$ denote the CDFs of $\eta_\theta$ and $\left(f_{r, \gamma}\right)_{\#} {\eta'} \left(s^{\prime}, a^{\prime}\right)$, respectively.
Following the steps of the proof of  \cite[Theorem 2]{cramerGAN} (\emph{unbiased gradients}):
\begin{align}
&\mathbb{E}_{s,a} \nabla_\theta \ell_2^2\left( F_{\theta},\mathbb{E}_{r,s',a'|s,a} F_{r,s',a'} \right)\nonumber\\
&\stackrel{\mathmakebox[\widthof{=}]{\text{(a)}}}{=} \mathbb{E}_{s,a}\int_{-\infty}^{\infty} \nabla_\theta \left( F_{\theta}(x) - \mathbb{E}_{r,s',a'|s,a} F_{r,s',a'}(x)  \right)^2 dx \nonumber\\
&\stackrel{\mathmakebox[\widthof{=}]{\text{(b)}}}{=} \mathbb{E}_{s,a} \mathbb{E}_{r,s',a'|s,a} \int_{-\infty}^{\infty} 2  \left( F_{\theta}(x) -  F_{r,s',a'}(x)  \right) \nabla_\theta  F_{\theta}(x) dx \nonumber\\
&= \mathbb{E}_{s,a,r,s',a'} \nabla_\theta \ell_2^2\left( F_{\theta}, F_{r,s',a'} \right) \label{eq:exp-gradient}
\end{align}
where (a) and (b) hold assuming that $F_{\theta}$ and $F_{r,s',a'}$ have light enough tails (which is our case since they are mixtures of $N$ Heaviside functions) to avoid infinite squared Cramér distances and expected gradients.
In the control case, the expectation over $a'$ is not needed anymore, since $a'$ is deterministically given by the policy.
Practically, Eq.~\eqref{eq:exp-gradient} allows us to use the average gradient of $\ell_2^2\left( F_{\theta}, F_{r,s',a'} \right)$ over batches of sample transitions for Cramér TD-learning.

\color{black}
\section{EXPERIMENTS}
\label{sec:experiments}

In light of the previous results, we investigate how the differences between the Cramér and the QR losses affect the results in synthetic and Atari 2600 experiments, considering the presence or not of non-crossing constraints and Huberization. The code and the full output of the experiments are available at \url{https://github.com/alherit/cr-dqn}.

\subsection{Synthetic experiment}

We first propose an experiment that is simple but representative of the challenges that DRL faces. 
We consider an MDP with only one possible action at one state $s$ that can transition to two possible states $s_1$ and $s_2$ with probabilities $2/3$ and $1/3$, respectively, each with a different return distribution---a Dirac located at -1 and 1 respectively. The goal is to learn the return distribution at $s$, i.e., the mixture distribution shown in red in Fig.~\ref{fig:synthetic}.
The figure shows the estimated distributions obtained after 1000 training iterations with the different losses and two architectures: a fully connected (FC) one as in QR-DQN and the non-crossing (NC) one of NC-QR-DQN with a comparable number of parameters (2712 and 2702, respectively). 
The networks output $N=12$ quantiles, allowing to represent the mixture exactly. We repeat the experiment 100 times. We show the average 1-Wasserstein distance $d_1$ and the standard deviation to quantify how close are the learned distributions with respect to the true target. See Appendix \ref{app:experiments} for details.

\begin{figure}
\captionsetup[sub]{font=tiny,labelfont={bf}}
\centering
     \begin{subfigure}[b]{0.48\columnwidth}
        \centering
        \includegraphics[align=c,width=\columnwidth]{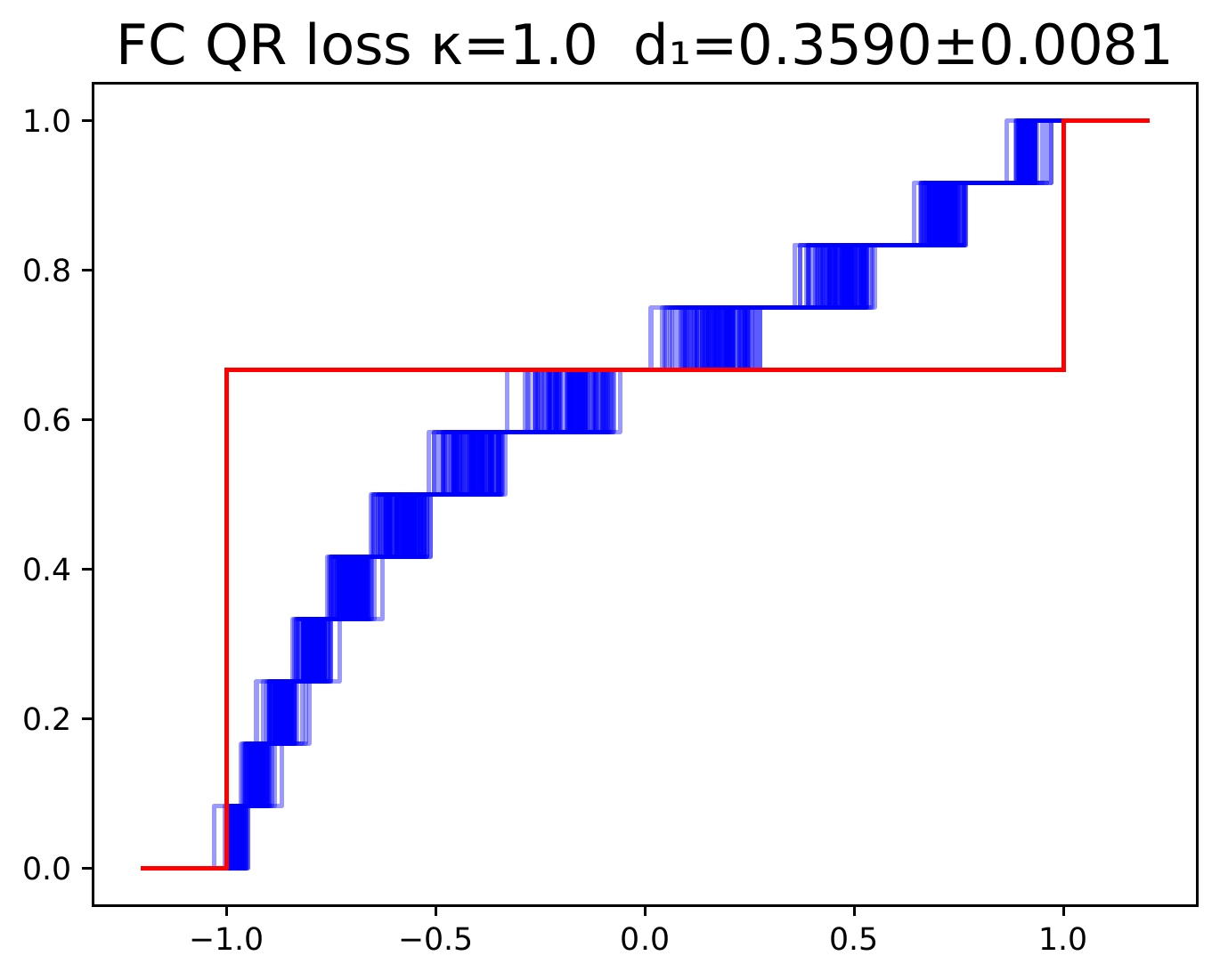}
        \vspace*{-3mm}
        \caption{}
        \label{subfig:fc-qr-loss1.0}
     \end{subfigure}
     \hfill
     \begin{subfigure}[b]{0.48\columnwidth}
        \centering
        \includegraphics[align=c,width=\columnwidth]{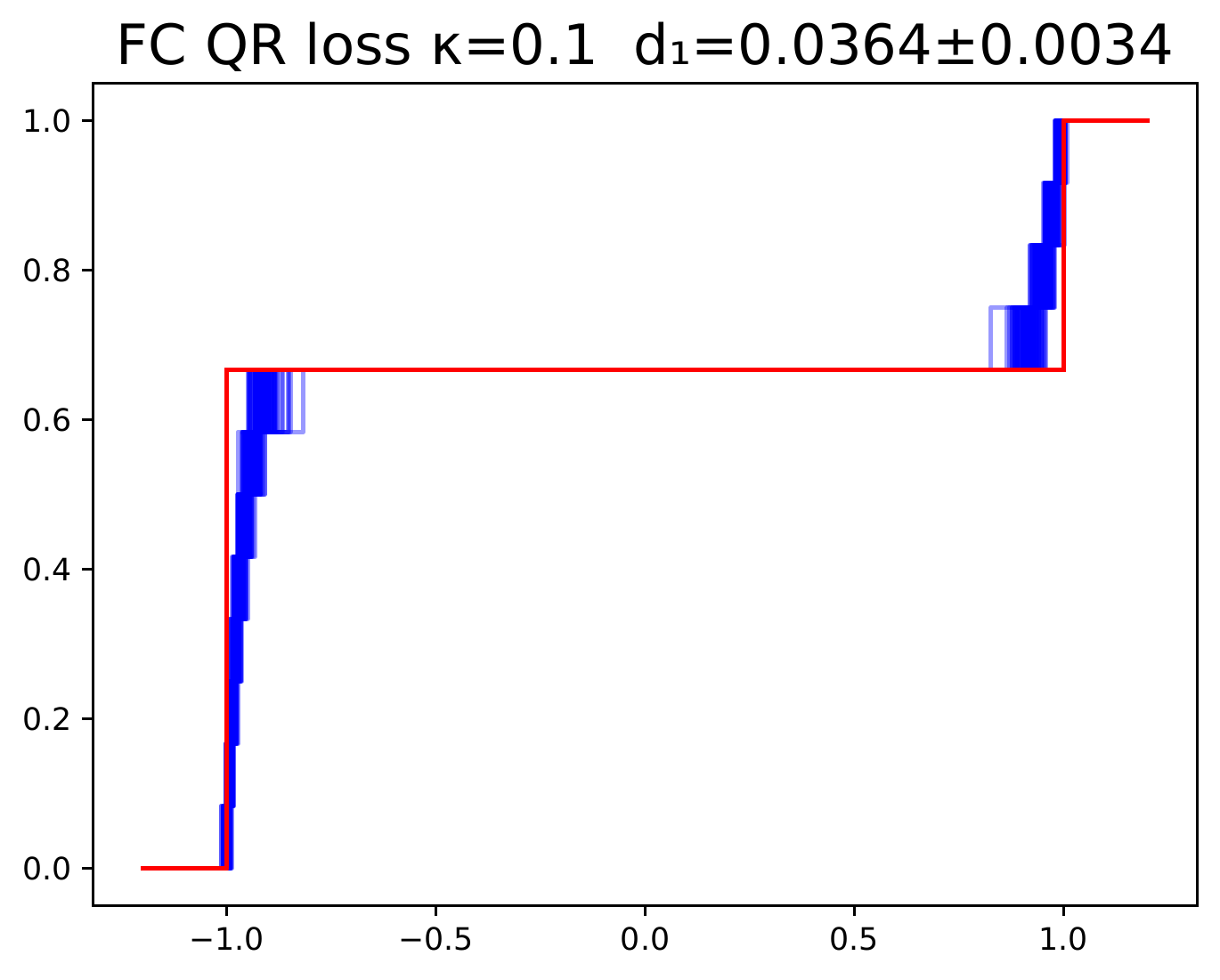}
        \vspace*{-3mm}
        \caption{}
        \label{subfig:fc-qr-loss0.1}
     \end{subfigure}
    \\
    \begin{subfigure}[b]{0.48\columnwidth}
        \centering
        \includegraphics[align=c,width=\columnwidth]{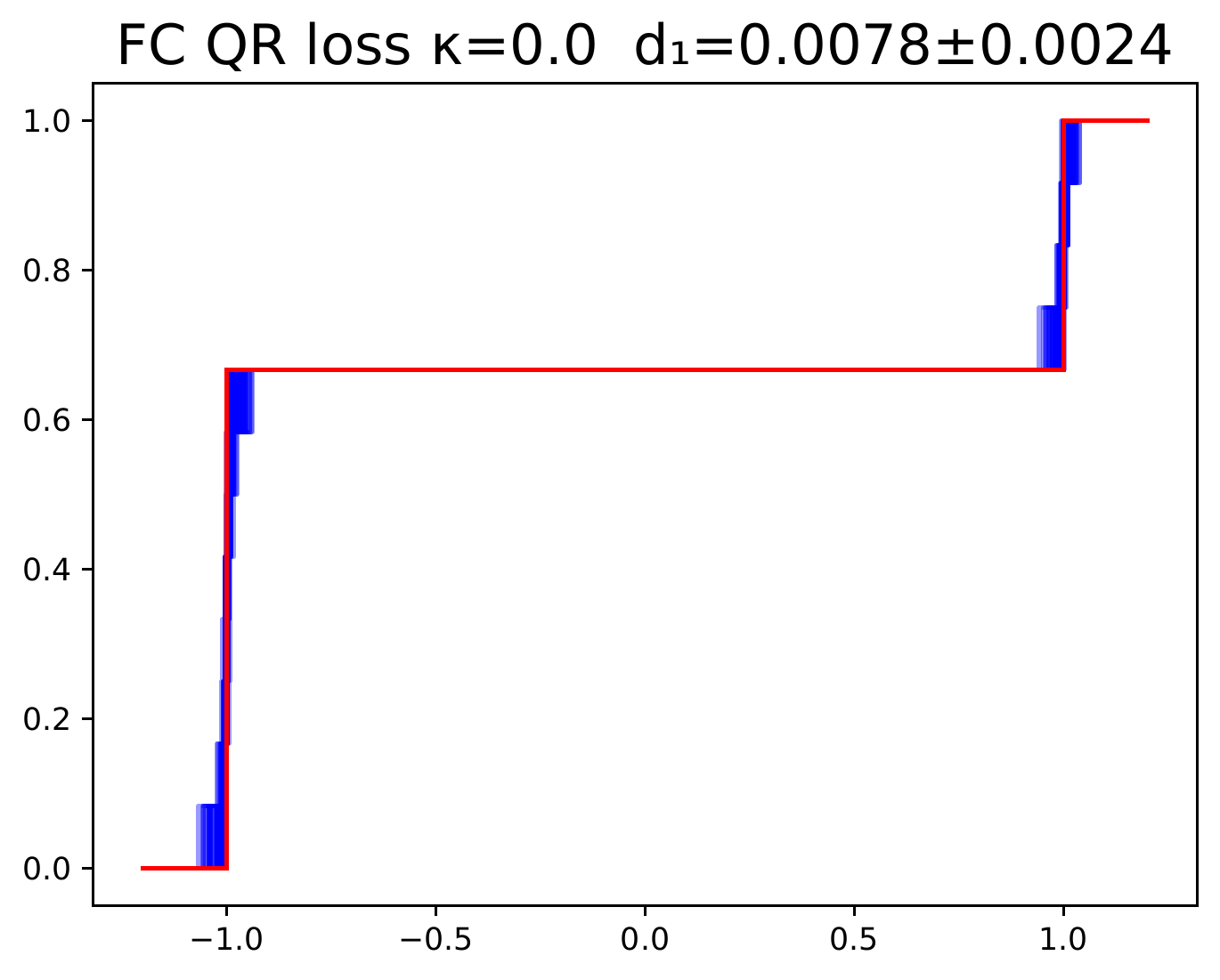}
        \vspace*{-3mm}
        \caption{}
        \label{subfig:fc-qr-loss0.0}
     \end{subfigure}
     \hfill
     \begin{subfigure}[b]{0.48\columnwidth}
        \centering
        \includegraphics[align=c,width=\columnwidth]{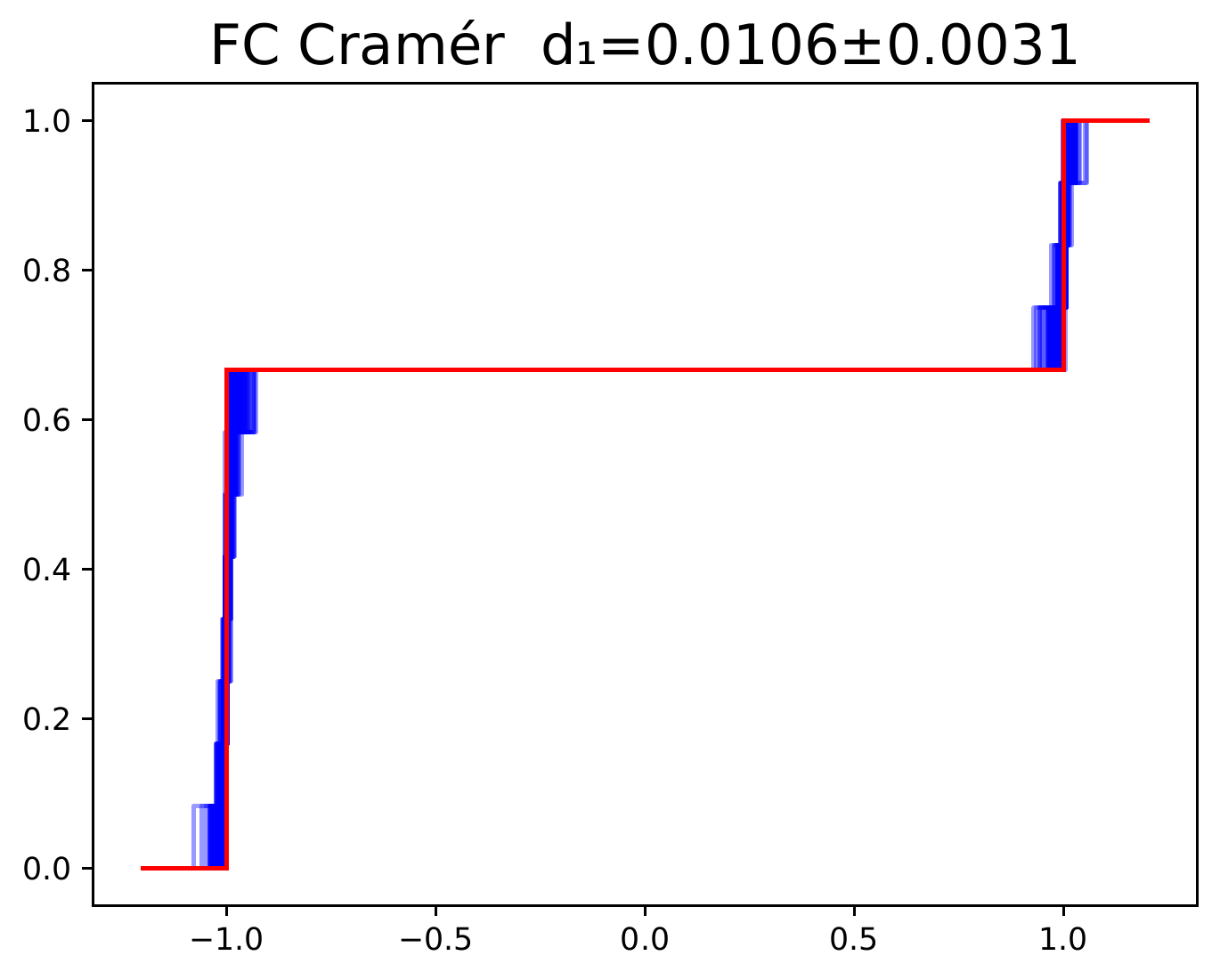}
        \vspace*{-3mm}
        \caption{}
        \label{subfig:fc-cramer}
     \end{subfigure}
     \\
    \begin{subfigure}[b]{0.48\columnwidth}
        \centering
        \includegraphics[align=c,width=\columnwidth]{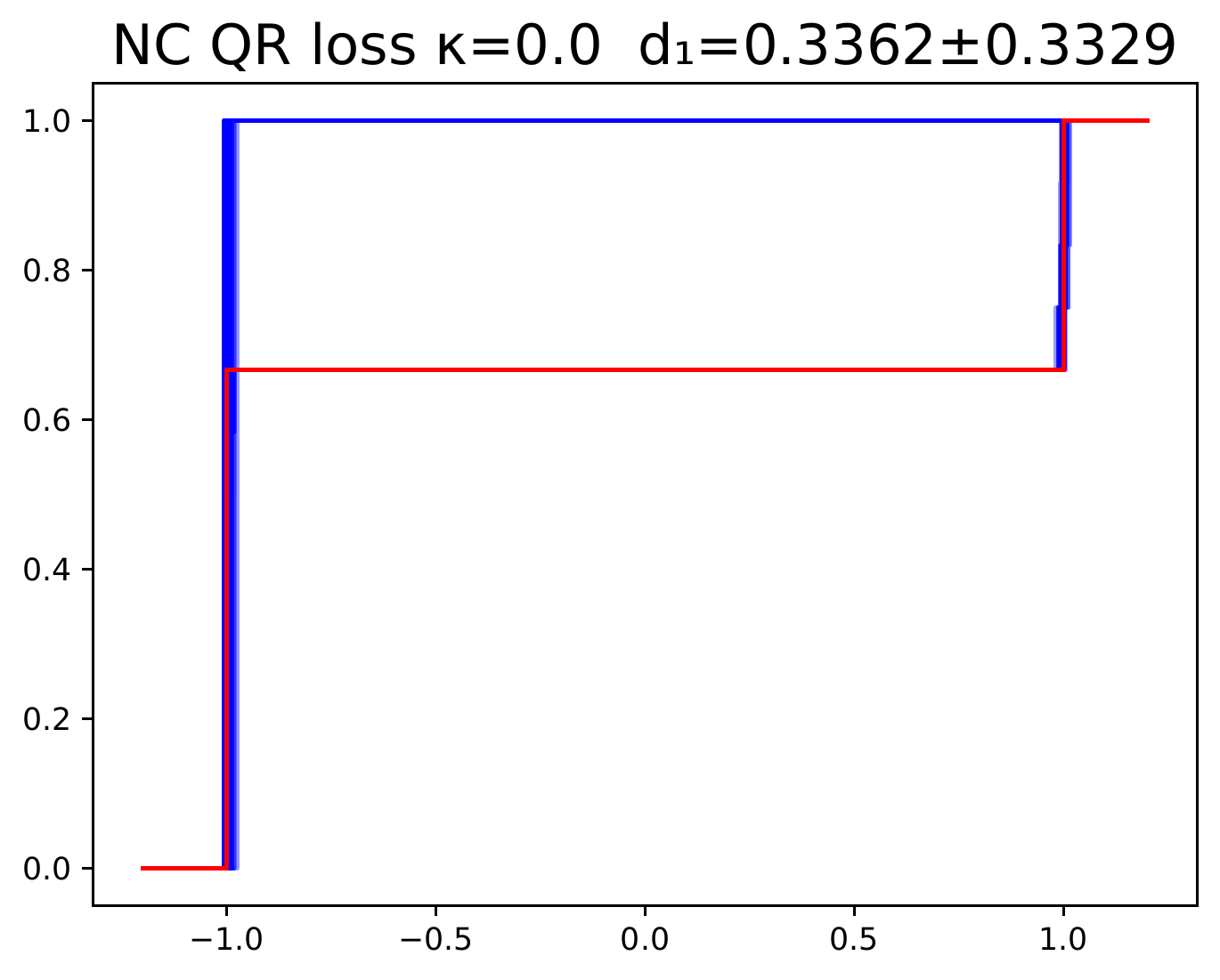}
        \vspace*{-3mm}
        \caption{}
        \label{subfig:nc-qr-loss0.1}
     \end{subfigure}
     \hfill
     \begin{subfigure}[b]{0.48\columnwidth}
        \centering
        \includegraphics[align=c,width=\columnwidth]{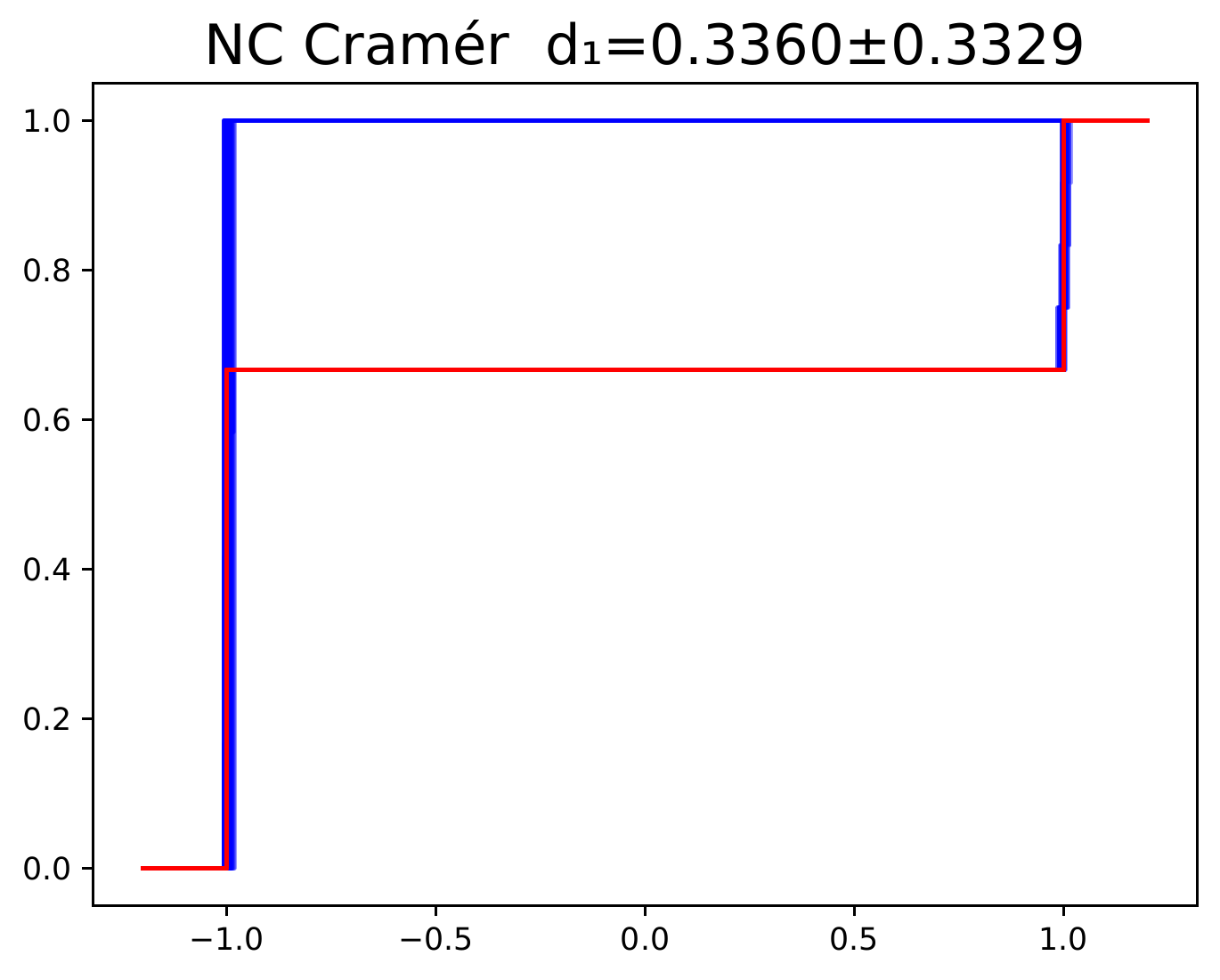}
        \vspace*{-3mm}
        \caption{}
        \label{subfig:nc-cramer}
     \end{subfigure}
     \\
    \begin{subfigure}[b]{0.48\columnwidth}
        \centering
        \includegraphics[align=c,width=\columnwidth]{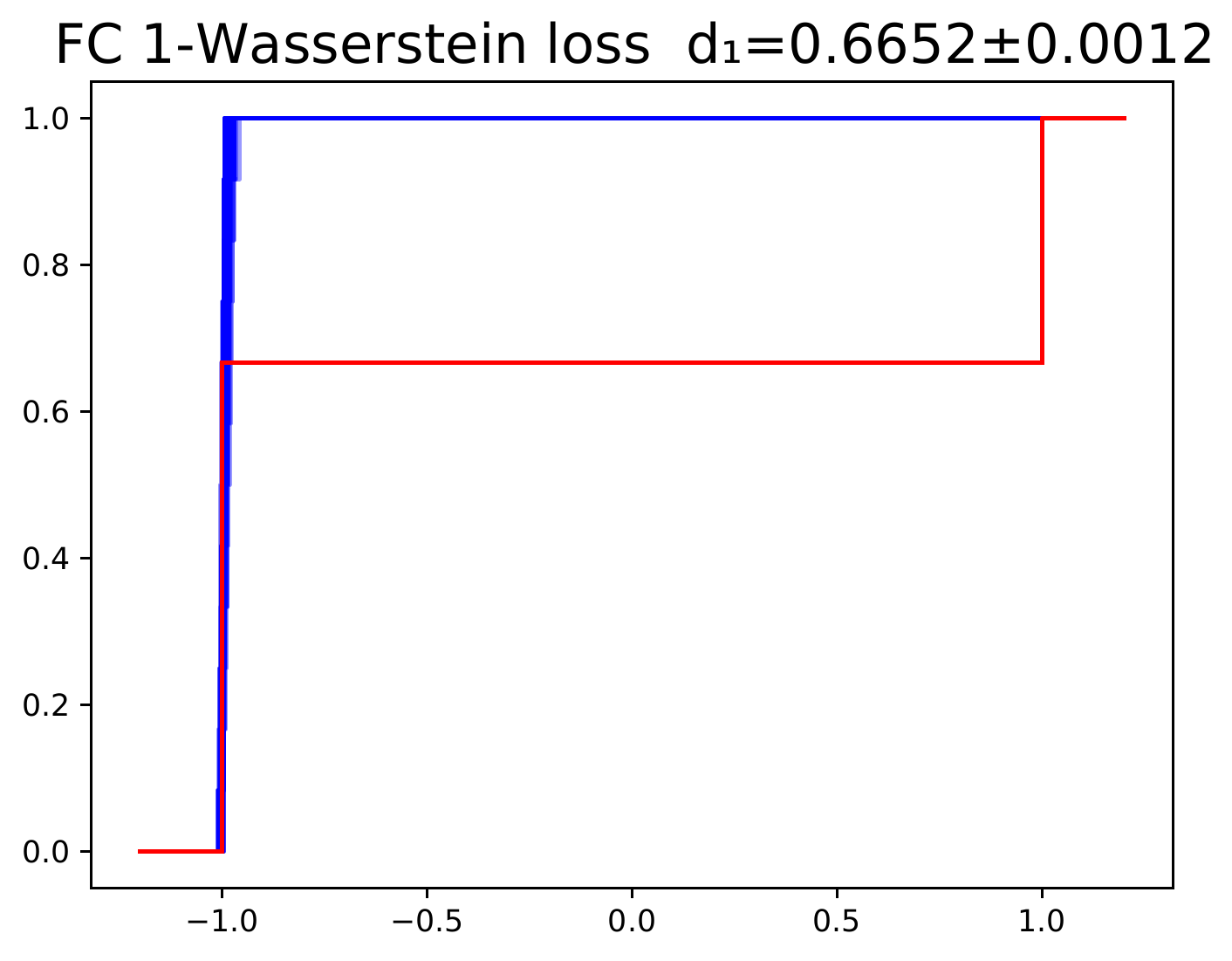}
        \vspace*{-3mm}
        \caption{}
        \label{subfig:fc-wasserstein}
     \end{subfigure}
     \hfill
     \begin{subfigure}[b]{0.48\columnwidth}
        \centering
        \includegraphics[align=c,width=\columnwidth]{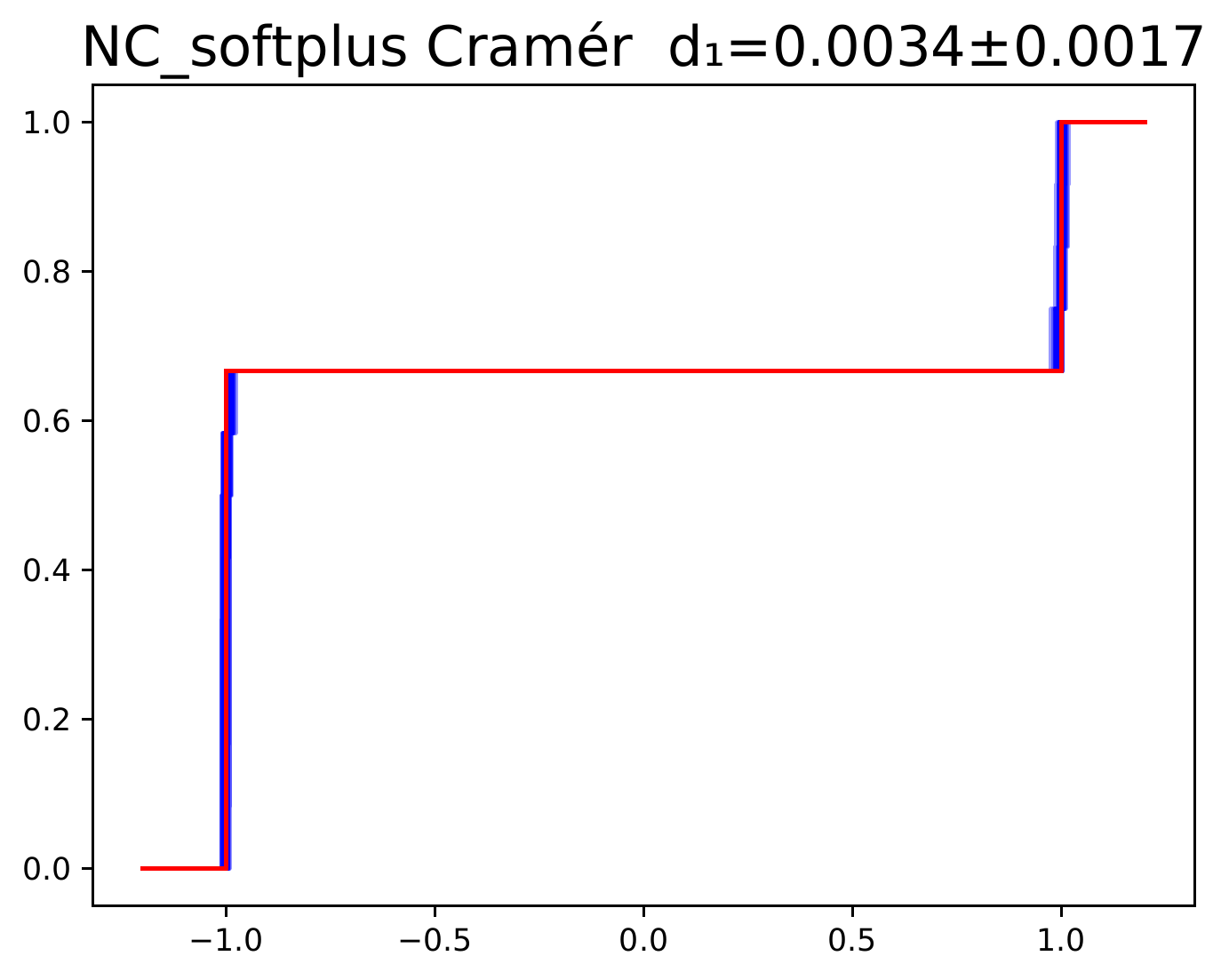}
        \vspace*{-3mm}
        \caption{}
        \label{subfig:nc-softplus-cramer}
     \end{subfigure}

\caption{{\bf Synthetic experiments.} The learned CDF for each trial is shown in blue. The average $d_1$ with the mixture of targets (in red) is shown for each case. 
 }
\label{fig:synthetic}
\end{figure} 
\begin{figure}
\centering
\includegraphics[align=c,width=.48\columnwidth]{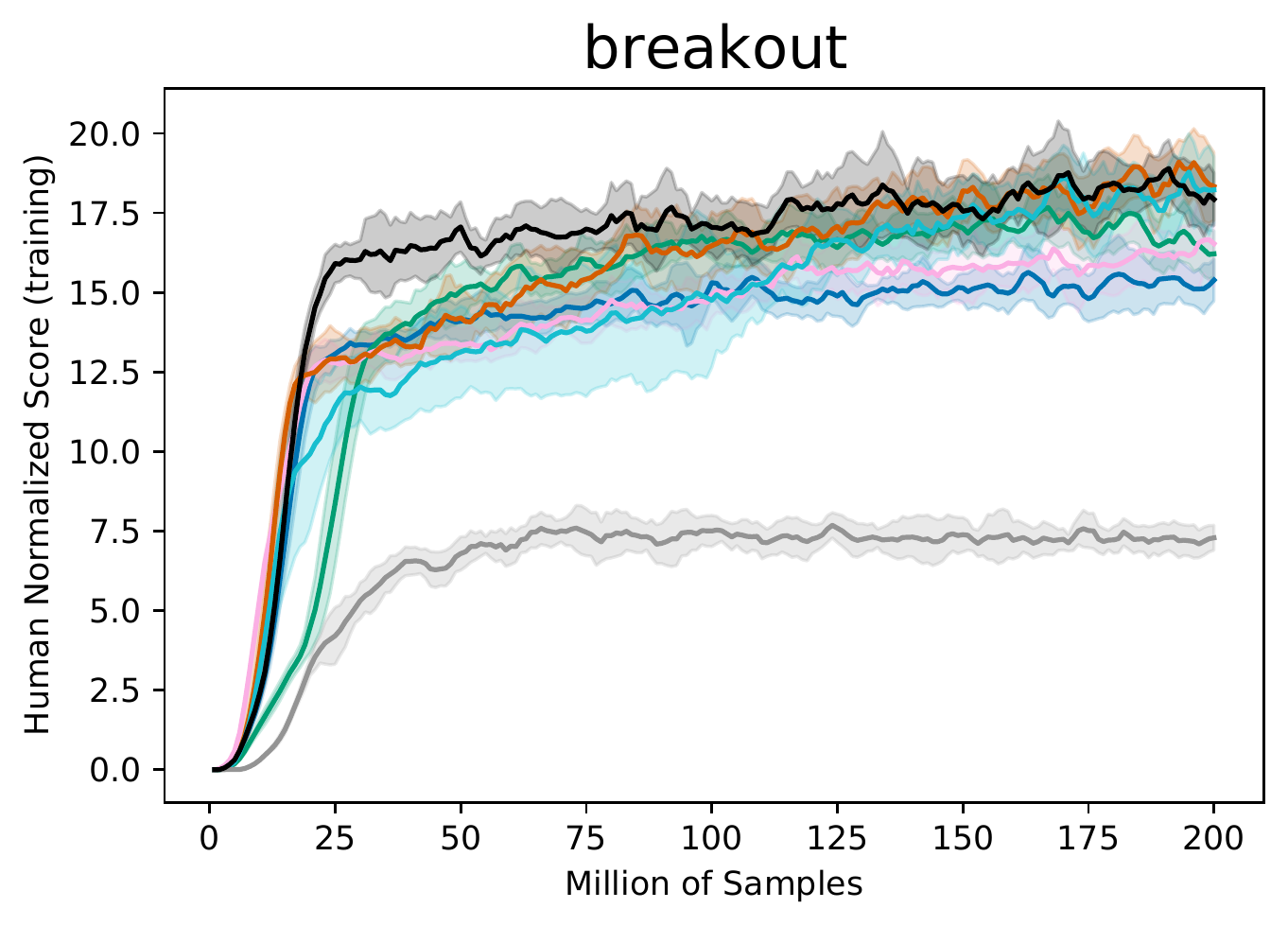}
\includegraphics[align=c,width=.48\columnwidth]{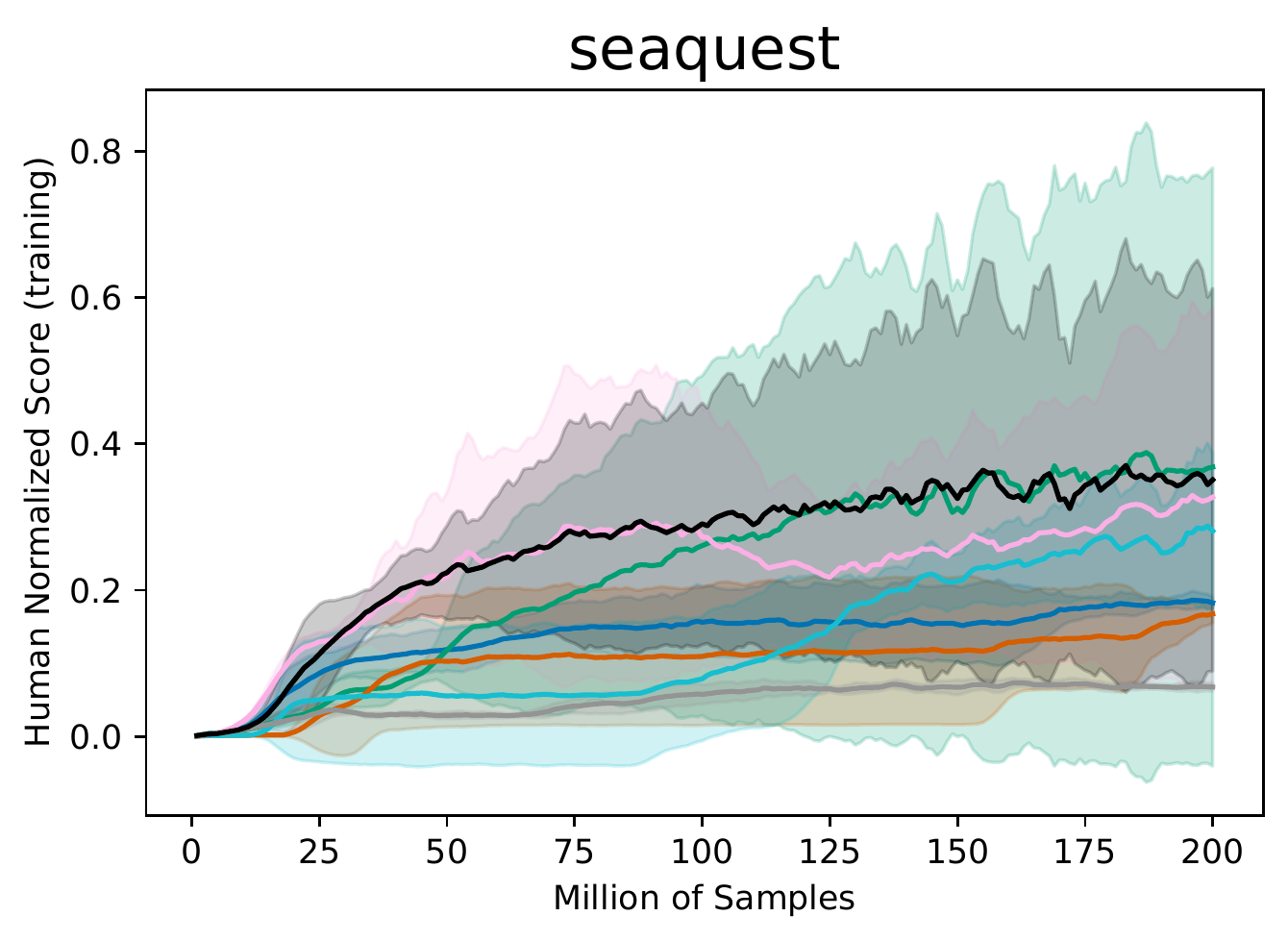}\\
\includegraphics[align=c,width=.48\columnwidth]{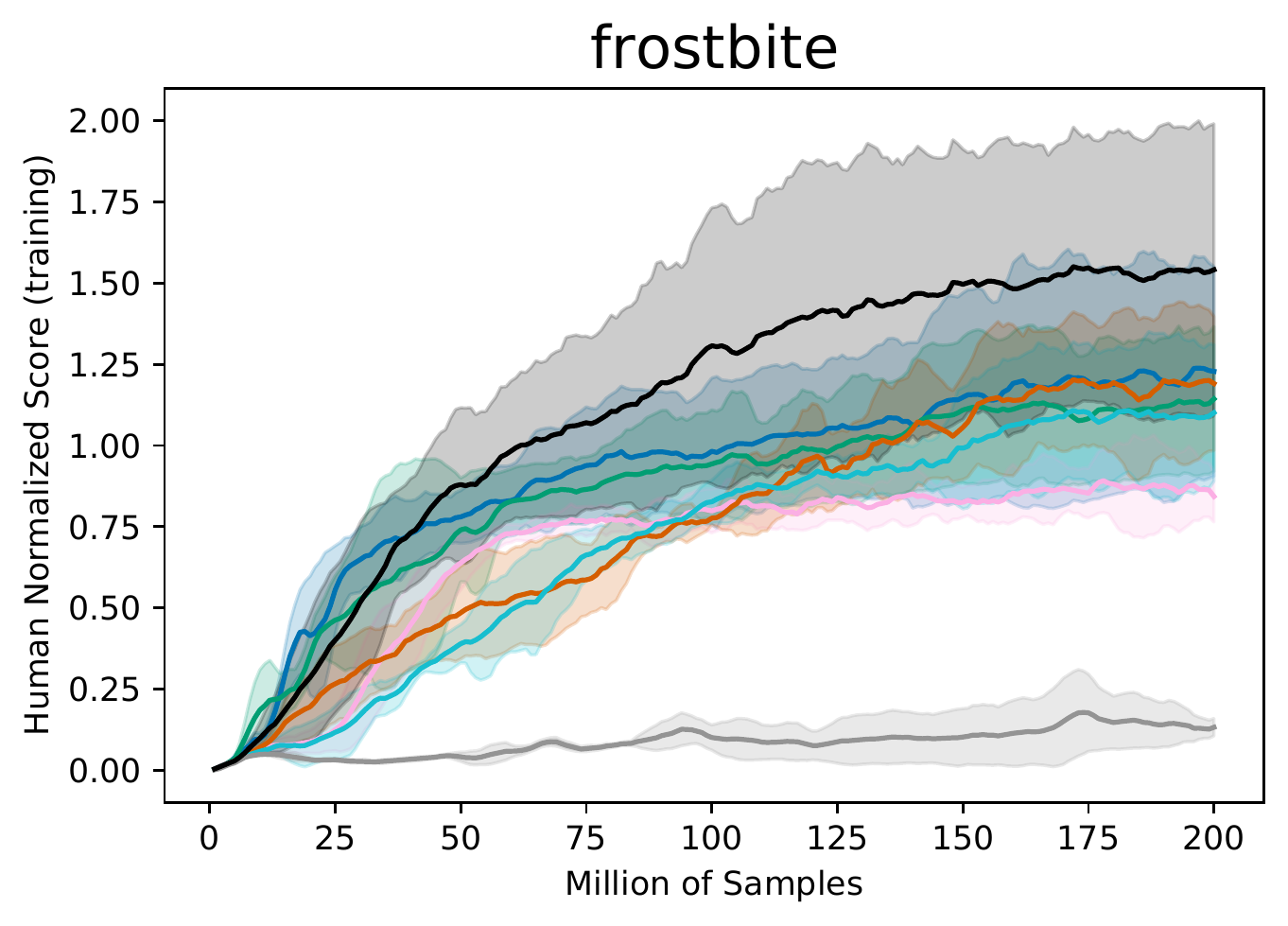}
\includegraphics[align=c,width=.48\columnwidth]{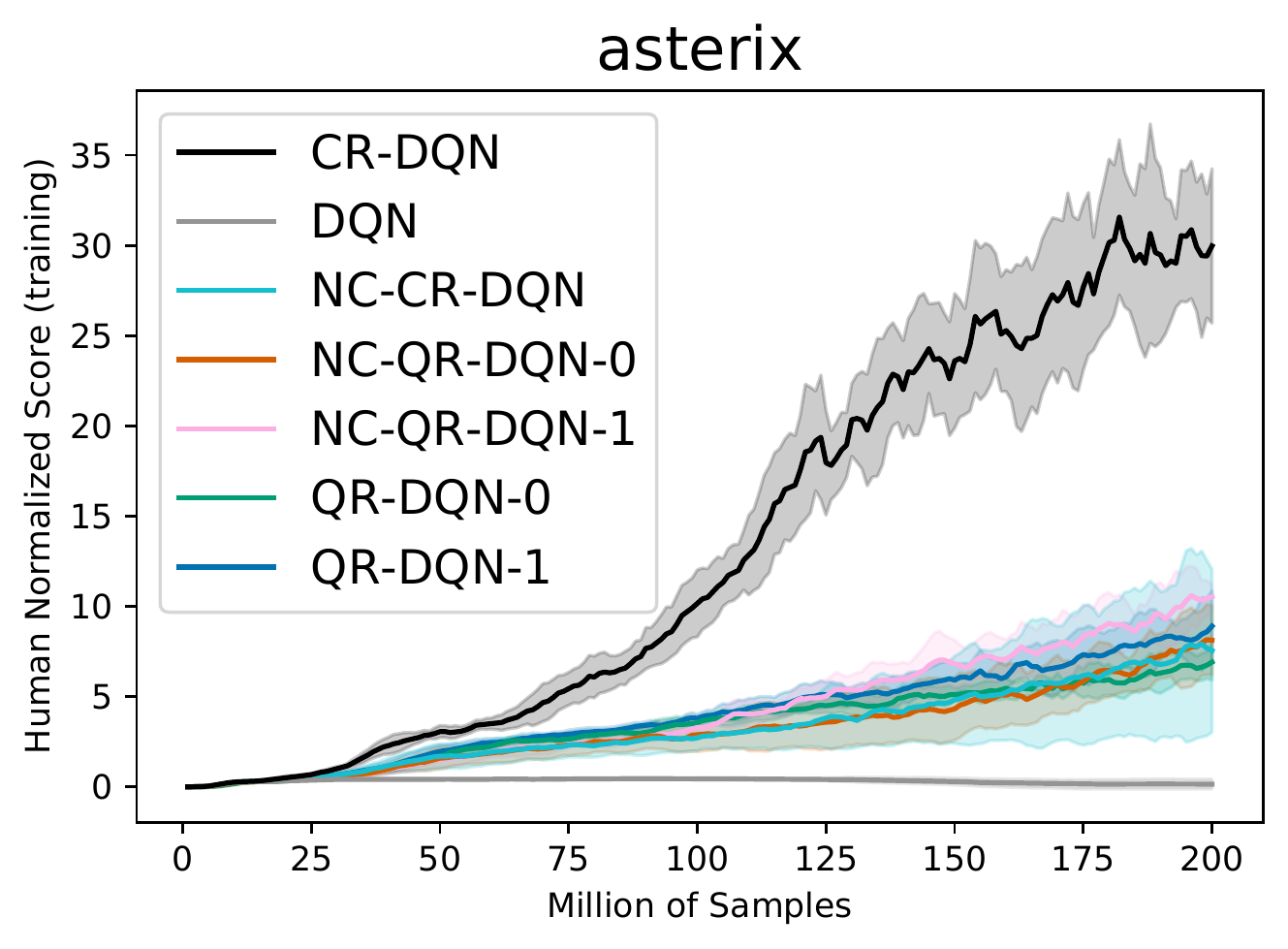}
\caption{{\bf Atari games.}  
CR: Cramér loss. The suffix is the value of $\kappa$. 
NC: non-crossing. %
Curves show mean and std.~dev.~of human-normalized online performance over 3 seeds, smoothed over a sliding window of 5 iter.%
}
\label{fig:atari}
\end{figure} 
We see in Fig.~\ref{subfig:fc-wasserstein} that, due to the biased gradients of the 1-Wasserstein loss, the learned distribution converges to one of the Diracs instead of converging to the mixture.
We also see that the Huberized QR loss yields a shrunken distribution (Fig.~\ref{subfig:fc-qr-loss0.1}), the effect being larger with $\kappa=1$ (Fig.~\ref{subfig:fc-qr-loss1.0}). The standard QR loss (Fig.~\ref{subfig:fc-qr-loss0.0}) and the Cramér one (Fig.~\ref{subfig:fc-cramer}) do not exhibit this effect but we see an oscillation around the actual step locations due to the lack of smoothness.
The Cramér loss exhibits a slightly larger oscillation effect as suggested by the larger 1-Wasserstein distance.
The non-crossing constraints make the QR loss (Fig.~\ref{subfig:nc-qr-loss0.1}) equivalent to the Cramér one (Fig.~\ref{subfig:nc-cramer}) and reduce the oscillation effect but introduce another effect due to the specific architecture. More precisely, the ReLU activation that outputs the scale factor $\alpha$ \citep[Eq.~(19)]{nc-qrdqn} is prone to the dying ReLU problem in this simple setting. This makes the NC architecture converge to one of Diracs in some of the trials. Replacing the ReLU activation by a SoftPlus solves the problem (Fig.~\ref{subfig:nc-softplus-cramer}).
Note that this problem is less likely to happen in more complex scenarios---with more states and actions---as in the Atari games considered next (see \cite{glorot2011deep}).

\subsection{Atari games}

We consider four Atari games exhibiting different learning behaviours. Fig.~\ref{fig:atari} shows the online training performance \citep{machado2018} given by different combinations of networks and losses. 
The NC network \citep{nc-qrdqn} and Algorithm \ref{algo} (denoted CR and used in CR-DQN and NC-CR-DQN) were implemented on top of the \dqnzoo framework \citep{dqnzoo2020github} which also provides pre-computed results for the two reference algorithms QR-DQN (aka QR-DQN-1) \citep{qrdqn} and DQN \citep{dqn}.
Equivalent hyperparameter values were used for all the methods, see Appendix \ref{app:experiments} for details.

Although equivalent in theory, NC-QR-DQN-0 and NC-CR-DQN do not exactly match empirically because of GPU non-determinism and differences in numerical errors. %
See Appendix \ref{app:experiments} for more details.

The permutation invariance of our sort-based algorithm makes the crossing quantile problem vanish, removing the need of non-crossing architectures that are prone to undesired effects as the dying ReLU problem. In these four games, the increased freedom of CR-DQN provides a significant advantage over the other methods with, in particular, a remarkable performance on Asterix. 

To provide comparable results with existing work, we report, in Table \ref{tab:agg-atari}, evaluation results over the full Atari 57 benchmark under
the best agent protocol (see, e.g., \cite{qrdqn}) obtained with the pre-computed results provided in \cite{dqnzoo2020github} for the contenders. We observe that CR-DQN outperforms C51 \citep{c51} and standard QR-DQN \citep{qrdqn}.

\begin{table}

    \centering
    \begin{tabular}{l|l|l}%
             & Seeds & Median \\%& Mean \\
    \hline
    DQN      & 5     & 85\%     \\%& 779\%     \\
    C51      & 5     & 183\%    \\%& 1596\%   \\
    QR-DQN-1 & 5     & 182\%    \\%& 1661\%   \\
    IQN      & 5     & 220\%    \\%& 2005\%   \\
    \hline
    CR-DQN   & 3     & 201\%     \\%& 1705\%  
    \end{tabular}
    \caption{Median of best scores across 57 Atari %
2600 games, measured as percentages of human baseline
\citep{nair2015massively} using reference values from \dqnzoo.}%
    \label{tab:agg-atari}
\end{table}

\section{DISCUSSION}
\label{sec:conclusion}

Our results shed light on QR-based algorithms by showing the equivalence of the Cramér projection with the 1-Wasserstein one, and that learning distributions with the QR loss under non-crossing constraints is essentially equivalent to learning with the Cramér loss.
On the practical viewpoint, we proposed a low complexity algorithm that we tested on synthetic examples and Atari games using an unconstrained architecture and another one with non-crossing constraints.

In the unconstrained setting, symmetries creates a factorial number of optimal solutions (due to the permutations): in a stochastic optimization perspective, this could facilitate (since there are more places to find optimal solutions) and give more freedom to the deep network but it can also make the learning process unstable by jumping from one region to a symmetric one. 
In a constrained setting, Algorithm \ref{algo} computes an output that is equivalent to that of the QR-loss and, thus, it is subject to the same lack of smoothness that has been reported to hurt the performance in comparison to Huberized QR-loss. However, Huberization breaks the equivalence with the Cramér distance and introduces biases whose magnitude depends on the chosen $\kappa$ and the scale of distributions, which can vary from one state to another. Another important point is that the architectures introducing monotonicity constraints can also introduce new effects depending on the design choices.  
As future work, we foresee  investigating alternative approaches to smoothen the Cramér loss.%

\subsubsection*{Acknowledgements}
Thanks to Mourad Boudia, Eoin Thomas and Rodrigo Acuña-Agost for their
insightful comments and to the anonymous reviewers whose suggestions have greatly improved this manuscript.

\medskip
\newpage
{
\small
	\bibliographystyle{plainnat}
	\bibliography{drl}
}

\clearpage
\appendix

\thispagestyle{empty}

\onecolumn \makesupplementtitle

\section{ADDITIONAL PROOFS}
\label{app:proofs}

\auxiliary*
\begin{proof}
A visual intuition of the proof is shown in Figure \ref{fig:midpoint-intuition}.
We decompose the integral as follows
\begin{align}
 \int_{t}^{t'} \lvert F(z) - H_\theta^{\tau,\tau'}(z)\rvert ^p dz 
 &=  \int_{t}^{\theta} (F(z)-\tau)^p dz +  \int_{\theta}^{t'} ( \tau' - F(z))^p dz \label{eq:continuity}\\
 &= \lim_{a\to t} \smallint (F(z)-\tau)^p dz \bigr\vert_a^\theta + 
\lim_{b\to t'} \smallint ( \tau' - F(z))^p dz \bigr\vert_\theta^b \label{eq:fund-thm-calc}
\end{align}
where the limits are taken to cover the particular cases of $t=-\infty$ and $t'=\infty$. The last equation stems from the second fundamental theorem of calculus, which holds since the integrated functions are bounded and the set of points of discontinuity has measure zero (since $F$ is a CDF).
Since we are minimizing with respect to $\theta$ we can drop the constant terms and consider
\begin{align}
\frac{d}{d \theta}  \smallint (F(z)-\tau)^p dz \bigr\vert_\theta - 
\smallint ( \tau' - F(z))^p dz \bigr\vert_\theta
= (F(\theta)-\tau)^p - ( \tau' - F(\theta))^p
.
\end{align}
First note that for $\theta\in[t,t']$, we have $F(\theta)-\tau>0$ and $\tau'-F(\theta)>0$.
Then, equating the derivative to zero yields
\begin{align}
(F(\theta)-\tau)^p - ( \tau' - F(\theta))^p = 0 
\Leftrightarrow F(\theta) -\tau =  \tau' - F(\theta) 
\Leftrightarrow F(\theta) = \frac{\tau + \tau'}{2}.
\end{align}
By replacing $=$ by $<$ (resp., $>$) in the previous equations, we see that the derivative is strictly negative (resp., strictly positive) if $F(\theta)<(\frac{\tau + \tau'}{2})$ (resp., $F(\theta)>(\frac{\tau + \tau'}{2})$), which proves the claim.
If there is a jump in $F$ making $F^{-1}$ undefined at $\frac{\tau + \tau'}{2}$, the set defined in Eq.~\eqref{eq:mid-point-sol} becomes empty.
However, the previous inequalities determining the sign of derivative still hold and the quantity to be minimized is a continuous function of $\theta$ (see Eq.~\eqref{eq:continuity}). Therefore, if we redefine $F^{-1}$ to be the inverse CDF, it makes $F^{-1}((\tau+\tau')/2)$ always a valid minimizer.
NB: if the standard inverse $F^{-1}$ is undefined at $\tau$ or $\tau'$, the whole derivation still holds if $F^{-1}$ is redefined as the inverse CDF. 
\end{proof}

\section{CORRECTNESS OF ALGORITHM \ref{algo}}
\label{app:algo}

\begin{figure}[H]
\centering
~\includegraphics[align=c,width=.4\columnwidth]{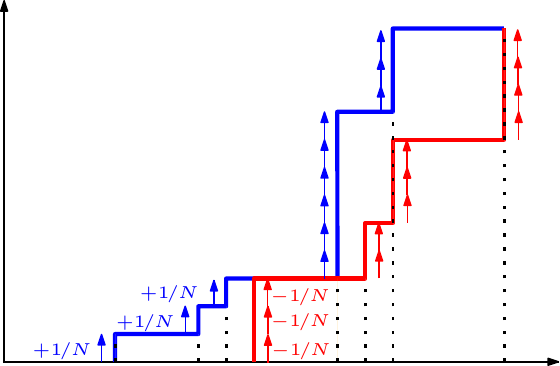}
\caption{{\bf Cramér loss algorithm.} Illustration of $\Delta_\tau$ computation by accumulating increments/decrements. 
 }
\label{fig:algo}
\end{figure} 

\begin{figure}[t!]
\centering
\begin{subfigure}[t]{.49\textwidth}
    \centering
    \includegraphics[width=.7\linewidth]{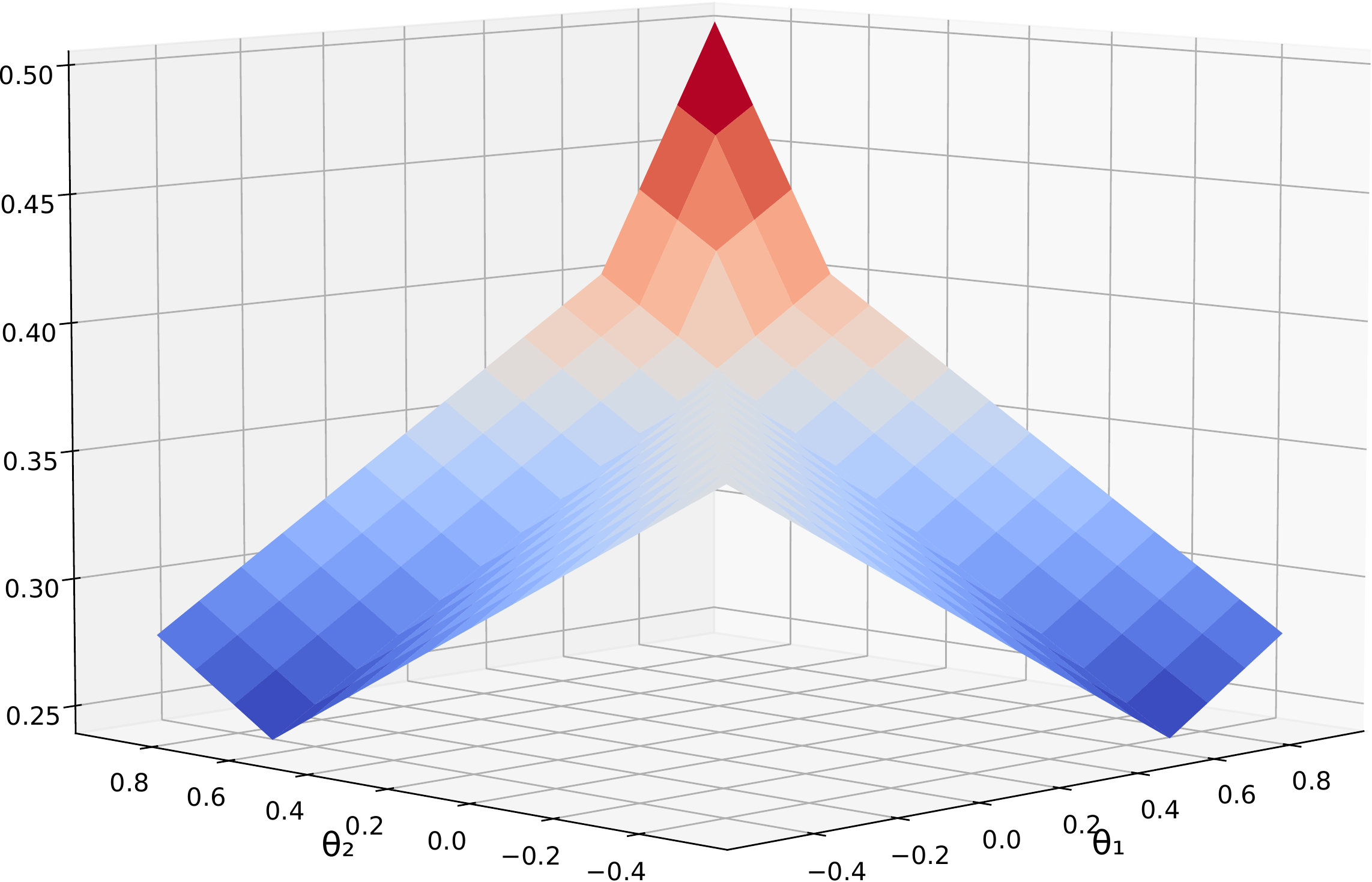}
    \caption{Cramér loss.}
\end{subfigure}
~
\begin{subfigure}[t]{.49\textwidth}
    \centering
    \includegraphics[width=.7\linewidth]{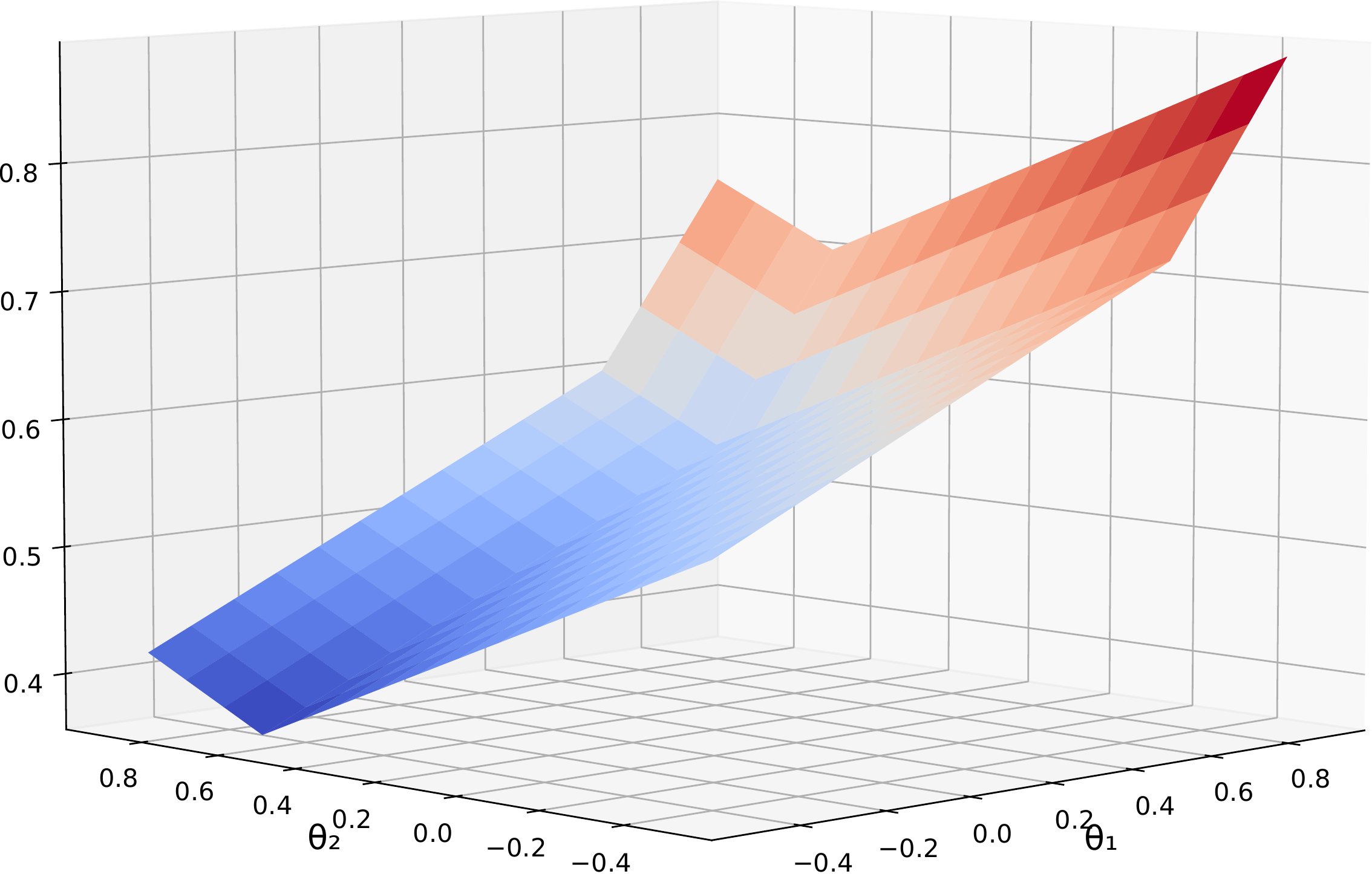}
    \caption{Quantile regression loss.}    
\end{subfigure}
\caption{{\bf Symmetry in the Cramér loss landscape (a) in comparison to the QR loss (b).}  The loss landscape correspond to estimating the return distribution of a state $s_0$ with transitions to states $s_1$ and $s_2$ with probability $1/3$ and $2/3$, respectively,  whose return distributions are Diracs located at $-0.5$ and $0.6$ respectively, with $N=3$. The plots are for a fixed $\theta_0=-0.5$. Notice that when $\theta_0\leq\theta_1\leq\theta_2$,  the two losses have collinear gradients as shown in Corollary \ref{cor:col-grad}.}
\label{fig:symmetries}
\end{figure}

\begin{restatable}[]{prop}{correctness}
Given two distributions $F(z)=\frac{1}{N}\sum_{i=1}^N  \ind_{z\geq \theta_i}$, and $\TF(z)=\frac{1}{N}\sum_{i=1}^N  \ind_{z\geq \Ttheta_i}$, Algorithm \ref{algo} computes 
\begin{equation}
\label{eq:algo-formula}
\int_{-\infty}^\infty (F(z) - \TF(z))^2 dz = \sum_{i=1}^{2N-1}  \left( \theta'_{i+1} - \theta'_{i}  \right)\left( \sum_{j \text{ s.t. } \theta_j\leq\theta'_i} \frac{1}{N} - \sum_{j \text{ s.t. } \Ttheta_j\leq\theta'_i} \frac{1}{N} \right)^2    
.
\end{equation}
\end{restatable}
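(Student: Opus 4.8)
The plan is to prove the statement in two stages: first establish the mathematical identity \eqref{eq:algo-formula}, and then verify that Algorithm \ref{algo} evaluates exactly its right-hand side.

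For the identity, I would start from the fact that the difference $G(z)\equiv F(z)-\TF(z)=\frac{1}{N}\sum_{i=1}^N\left(\ind_{z\geq\theta_i}-\ind_{z\geq\Ttheta_i}\right)$ is a piecewise-constant, right-continuous step function whose only jumps occur at the $2N$ merged quantile locations $\theta'_1\leq\dots\leq\theta'_{2N}$. On both unbounded tails $G$ vanishes: for $z<\theta'_1$ none of the indicators is active, so $G(z)=0$, and for $z\geq\theta'_{2N}$ all $2N$ indicators are active, so $G(z)=\frac{1}{N}N-\frac{1}{N}N=0$. Hence the integral collapses to a finite sum over the $2N-1$ bounded intervals $[\theta'_i,\theta'_{i+1})$, on each of which $G$ is constant, giving $\int_{-\infty}^\infty G(z)^2\,dz=\sum_{i=1}^{2N-1}(\theta'_{i+1}-\theta'_i)\,G(\theta'_i)^2$.

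It then remains to identify $G(\theta'_i)$. Because $\theta'_i$ and $\theta'_{i+1}$ are consecutive in the sorted merged sequence, no quantile of either distribution lies strictly inside $(\theta'_i,\theta'_{i+1})$; therefore for every $z\in[\theta'_i,\theta'_{i+1})$ the active index set is frozen, i.e.\ $\{j:\theta_j\leq z\}=\{j:\theta_j\leq\theta'_i\}$ and likewise for $\Ttheta$. This yields $F(z)=\sum_{j:\theta_j\leq\theta'_i}\frac{1}{N}$ and $\TF(z)=\sum_{j:\Ttheta_j\leq\theta'_i}\frac{1}{N}$ on that interval, and substituting into the sum above reproduces \eqref{eq:algo-formula} exactly. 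Ties cause no difficulty: if $\theta'_i=\theta'_{i+1}$ then $\theta'_{i+1}-\theta'_i=0$, so the corresponding term drops out and the value assigned to $G$ across a coincident pair is immaterial.

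Finally, I would trace the algorithm line by line against this formula. After sorting the concatenated vector, $\Delta_z$ holds the successive widths $\theta'_{i+1}-\theta'_i$. The vector $\Delta_\tau$ assigns $-\frac{1}{N}$ to every coordinate originating from $\vtheta$ and $+\frac{1}{N}$ to every coordinate from $\vTtheta$, permuted into the sorted order; its prefix sum at position $i$ therefore equals $\frac{1}{N}\#\{\Ttheta_j\leq\theta'_i\}-\frac{1}{N}\#\{\theta_j\leq\theta'_i\}=-G(\theta'_i)$, and the slice $[:\text{-}1]$ restricts $i$ to $\{1,\dots,2N-1\}$. Squaring discards the sign, so $I_i=G(\theta'_i)^2(\theta'_{i+1}-\theta'_i)$ and $\mathrm{sum}(I)$ returns the right-hand side of \eqref{eq:algo-formula}. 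I expect the main work to be bookkeeping rather than any deep obstacle: checking that the \texttt{argsort} permutation aligns the $\pm\frac{1}{N}$ increments with their sorted positions, and that the truncated prefix sum reproduces precisely the left-endpoint CDF difference on each interval, including the case where several quantiles coincide.
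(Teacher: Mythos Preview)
Your proposal is correct and follows essentially the same route as the paper: decompose the integral over the $2N-1$ intervals between consecutive sorted merged quantiles, identify the constant value of $F-\TF$ on each interval via the frozen index sets, and then match the algorithm's $\Delta_z$ and cumulative $\Delta_\tau$ to these widths and values, handling ties through the zero-width argument. As a minor point in your favor, your sign tracking ($\Delta_\tau$ assigns $-\tfrac{1}{N}$ to entries from $\vtheta$ and $+\tfrac{1}{N}$ to entries from $\vTtheta$, so the cumsum equals $-G(\theta'_i)$) is the one consistent with the algorithm as written; the paper's proof states the opposite sign convention, which is a harmless slip since the result is squared.
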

\begin{proof}
Consider the sorted sequence of merged quantiles 
\begin{equation}
\vtheta'\equiv \theta'_1,\dots,\theta'_{2N} \equiv \mathrm{sort}\left(\{\theta_i\}_{i=1..N}\bigcup\{\Ttheta_i\}_{i=1..N}\right)
.
\end{equation}
We have that $F(z) - \TF(z)\equiv\Delta_i$ is constant in $[\theta'_i,\theta'_{i+1}), \forall i\in 1..2N-1$ and is zero elsewhere. 
Therefore,
\begin{equation}
\int_{-\infty}^\infty (F(z) - \TF(z))^2 dz =
\sum_{i=1}^{2N-1} \int_{\theta'_i}^{\theta'_{i+1}} (F(z) - \TF(z))^2 dz =
\sum_{i=1}^{2N-1} \Delta_i^2(\theta'_{i+1} - \theta'_i)  
\end{equation}
If $\theta'_i\leq z < \theta'_{i+1}$, then 
\begin{align}
F(z)&=\frac{1}{N}\sum_{j=1}^N  \ind_{z\geq \theta_j} = \frac{1}{N} \sum_{j \text{ s.t. } \theta_j\leq\theta'_i} 1 \\
\TF(z)&=\frac{1}{N}\sum_{j=1}^N  \ind_{z\geq \Ttheta_j} = \frac{1}{N} \sum_{j \text{ s.t. } \Ttheta_j\leq\theta'_i} 1
\end{align}
and thus
\begin{equation}
\Delta_i =  \sum_{j \text{ s.t. } \theta_j\leq\theta'_i} \frac{1}{N} - \sum_{j \text{ s.t. } \Ttheta_j\leq\theta'_i} \frac{1}{N} 
,
\end{equation}
which proves \eqref{eq:algo-formula}.

The algorithm computes the differences $(\theta'_{i+1} - \theta'_i)$ and stores them in $\Delta_z$.
After the steps
\begin{align}
    \Delta_\tau&\leftarrow \mathrm{concat}\left(-\frac{1}{N}\mathbf{1}_N,\frac{1}{N}\mathbf{1}_N\right)\\
    \Delta_\tau&\leftarrow \Delta_\tau[i_1,\dots,i_{2N}]
    ,
\end{align}
in words, the $i$-th element of the vector $\Delta_\tau$ is $-\frac{1}{N}$ if $\theta'_i$ comes from $\vTtheta$ or $\frac{1}{N}$ otherwise, i.e.
\begin{equation}
    \Delta_\tau[i]= \frac{1}{N} (-1)^{\ind_{\exists j \theta'_i\equiv\Ttheta_j}}
\end{equation}
where $\equiv$ denotes symbol equality. See Fig.~\ref{fig:algo} for an illustration. After the final step
\begin{equation}
\Delta_\tau\leftarrow \mathrm{cumsum}\left(\Delta_\tau\right)[:\text{-}1],
\end{equation}
the $i$-th element of the vector $\Delta_\tau$ can be expressed as
\begin{align}
\label{eq:cumsum-i}
    \Delta_\tau[i]&=\frac{1}{N}\sum_{k=1}^i (-1)^{\ind_{\exists j \theta'_k\equiv\Ttheta_j}}
.
\end{align}
If $\theta'_i\neq\theta'_{i+1}$, then $\Delta_\tau[i]=\Delta_i$. Otherwise, $\Delta_\tau[i]\neq\Delta_i$, but, since $\theta'_{i+1}-\theta'_i=0$, the corresponding term in \eqref{eq:algo-formula} is zero too.
Therefore, the algorithm produces the claimed output.
\end{proof}

\section{EXPERIMENTAL DETAILS}
\label{app:experiments}

\subsection{The networks}

We describe here the two types of architecture used in the experiments. See Figure \ref{fig:architectures} for an illustration.
\begin{figure}
    \centering
    \begin{subfigure}{\textwidth}
        \centering
        \includegraphics{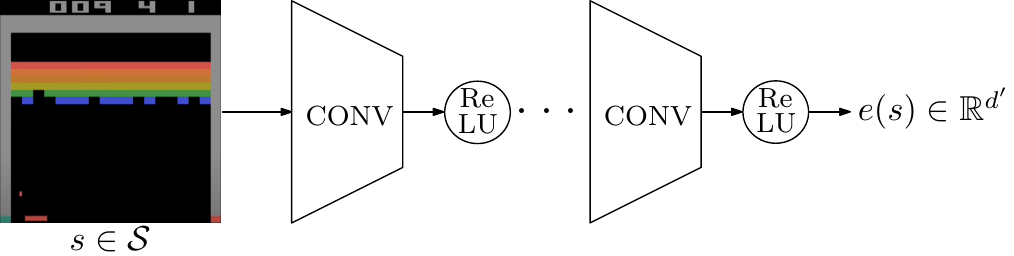}
        \caption{DQN backbone: feature extraction by a series of convolutional layers with ReLU activations.}
        \label{subfig:torso}
    \end{subfigure}
    \par\bigskip %
    \begin{subfigure}{\textwidth}
        \centering
        \includegraphics{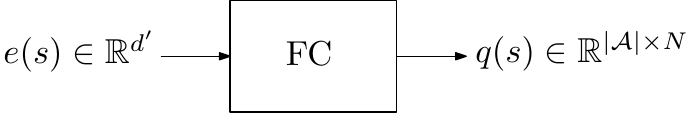}
        \caption{QR-DQN head: a fully-connected network.} 
        \label{subfig:dqn-head}
    \end{subfigure}
    \par\bigskip %
    \begin{subfigure}{\textwidth}
        \centering
        \includegraphics{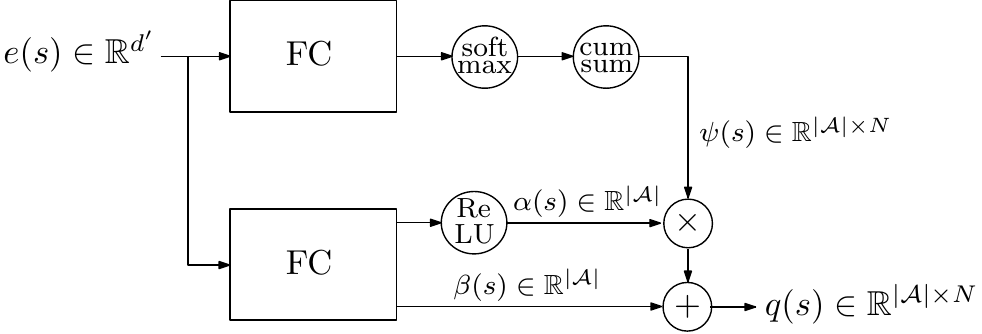}
        \caption{Non-Crossing (NC) head: combination of NCQL (upper part) and SF (lower part) networks.}
        \label{subfig:nc-head}
    \end{subfigure}
    \caption{{\bf Architectures used in the experiments.}}
    \label{fig:architectures}
\end{figure}
QR-DQN \citep{qrdqn} uses a series of convolutional layers each one followed by a ReLU activation in order to extract features from the input frames to obtain an embedded state $e(s)\in\R^{d'}$ (Fig.~\ref{subfig:torso}). They are followed by a fully connected network with $\lambda$ layers of $\eta$ nodes each and an output layer of size $\abs{\mathcal{A}}\times N$ (Fig.~\ref{subfig:dqn-head}). 

Following \cite{nc-qrdqn}, the NC network used in the experiments replaces the fully connected network of QR-DQN by a \emph{Non-Crossing Quantile Logit} (NCQL) network and a \emph{Scale Factor} (SF) network (Fig.~\ref{subfig:nc-head}). The NCQL network maps the embedded state $e(s)$ to $\abs{\mathcal{A}}\times N$-dimensional logits by using a fully connected network of $\lambda$ layers with $\eta$ nodes each, which is followed by a softmax transformation. Then a cumulated sum operator produces a non-decreasing sequence of normalized quantile values $\psi(s)[a,1],\dots,\psi(s)[a,N]$ for each action $a$.
The SF network produces an output in $\abs{\mathcal{A}}\times2$ representing the scale $\alpha(s)[a]$ and the location $\beta(s)[a]$ of the CDF, by mapping the embedded state $e(s)$ through a fully-connected network of $\lambda$ layers and $\eta$ nodes. A ReLU function is applied to the output corresponding to the scale $\alpha(s)[a]$ to ensure its non-negativity.
The final quantile estimates are obtained by combining the outputs of the two networks as follows
\begin{equation}
q(s)[a,i]:=\alpha(s)[a] \times \psi(s)[a,i]+\beta(s)[a]\;; i=1, \ldots, N , a=1, \ldots,\abs{\mathcal{A}} .
\end{equation}

Since in the synthetic experiment there is only one state, the feature extraction layers are removed and therefore QR-DQN turns into a standard fully-connected (FC) architecture. The NC architecture in this case boils down to the combined NCQL and SF networks.

\subsection{Synthetic experiment}
This experiment considers an MDP with only one possible action in one state $s$ that can transition to two possible states $s_1$ and $s_2$ with probabilities $2/3$ and $1/3$, respectively, each with a different return distribution---a Dirac located at -1 and 1 respectively. The goal is to learn the return distribution at $s$.

Since we aim at learning the return distribution of only one state, the two networks (FC and NC) take a constant scalar input 1. The FC and NC networks have $\lambda=2$ hidden layers of $\eta=45$ and $\eta=32$ nodes, respectively, and an output of $N=12$ quantiles allowing to represent the mixture exactly.

We use the Adam optimizer \citep{adam} with a learning rate of \num{1e-3} and a batch size of 32.

\subsection{Atari games}

We implemented our algorithm on top of the \dqnzoo \citep{dqnzoo2020github} framework, which integrates reference implementations of RL algorithms with the gym/atari-py RL environment \citep{brockman2016openai}. \dqnzoo provides pre-computed simulation results for each of these algorithms, each of them being run on 5 seeds and on the full set of 57 Atari 2600 games. 

In order to implement the NC architecture, we replaced the fully connected network in the \dqnzoo implementation of QR-DQN by the combination of the NCQL and SF networks.

\paragraph{Hyperparameters} For model training, we set our hyperparameters with the values used in \cite{qrdqn} for the epsilon decay and experience replay settings. Notice that ADAM's invariance (cf. Remark \ref{rmk:adam-invariance}) is broken with the parameter $\epsilon$ used in the update step to avoid divisions by zero \citep{adam}: $\theta_{t} \leftarrow \theta_{t-1}-\alpha \cdot \widehat{m}_{t} /\left(\sqrt{\widehat{v}_{t}}+\epsilon\right)$, where $\widehat{m}_{t}$ and $\widehat{v}_{t}$ are the first and second moment estimates at timestep $t$, which are scaled by a factor of $c$ and $c^2$ respectively when the gradient is scaled by $c$. Since the gradient of the Cramér loss is $c=2/N$ times the one of the QR loss (cf.~Corollary \ref{cor:col-grad}), we use the adjusted $\epsilon'\equiv (\nicefrac{2}{N})\epsilon$ to have equivalent update steps. Each experiment consists in 200 iterations. Each iteration is made of a learning phase (1 million frames), followed by an evaluation phase, on 500 thousands frames. We thus use the same experiment procedure, and the same epsilon hyperparameter than the one used for the experiments provided with \dqnzoo; also, our neural network architecture uses the same three convolutional layers as the other algorithms implemented within \dqnzoo. The experiment settings being the same, our experiment performance can therefore be compared to the experiment data provided with \dqnzoo for the other algorithms. 
Finally, the neural networks are defined by $\lambda=1$, $\eta=512$ and $N=201$.  Table \ref{tab:hyperparams} summarizes the hyperparameters and their values.

\paragraph{Online training performance}
Performance during training protocol: this protocol, described in \cite{machado2018}, puts the emphasis on the learning quality. It consists in using normalized training scores to evaluate the algorithms.
Human-normalization of score is given by \cite{Hasselt2016}: $\mathrm{normalized\_score} = \frac{\mathrm{agent\_score-random}}{\mathrm{human-random}}$ where $\mathrm{random}$ and $\mathrm{human}$ are baseline scores, given for each game.

\begin{table}
\caption{{\bf Hyperparameters for *-\{C|Q\}R-DQN methods.}}
\small
\centering
\label{tab:hyperparams}
\begin{tabular}{lll}
\toprule
Hyperparameter                               & Value     & Comment           \\
\midrule
replay\_capacity                             & 1e6       &                   \\
min\_replay\_capacity\_fraction              & 0.05      & Min replay set size for learning    \\
batch\_size                                  & 32        &                   \\
max\_frames\_per\_episode                    & 108000    & = 30 min          \\
num\_action\_repeats                         & 4         & In frames         \\
num\_stacked\_frames                         & 4         &                   \\
exploration\_epsilon\_begin\_value           & 1         &                   \\
exploration\_epsilon\_end\_value             & 0.01      &                   \\
exploration\_epsilon\_decay\_frame\_fraction & 0.02      &                   \\
eval\_exploration\_epsilon                   & 0.001     &                   \\
target\_network\_update\_period              & 4e4       &                   \\
learning\_rate                               & 5e-5      &                   \\
optimizer\_epsilon (for *-CR-DQN and NC-QR-DQN-0)                          & 0.01 / 32 * 2/$N$ & ADAM's parameter \\
optimizer\_epsilon (otherwise)          & 0.01 / 32  & ADAM's parameter   \\

additional\_discount                         & 0.99      & Discount_rate multiplier \\
max\_abs\_reward                             & 1         &                   \\
max\_global\_grad\_norm                      & 10        & Gradient clipping  \\
num\_iterations                              & 200       &                   \\
num\_train\_frames                           & 1e6       & Per iteration    \\
num\_eval\_frames                            & 5e5       & Per iteration      \\
learn\_period                                & 16        & One learning step each 16 frames \\
num\_quantiles                               & 201       &  $N$               \\
\midrule
Convolutional layer 1 & 32, (8, 8), (4, 4) & num_features, kernel_shape, stride\\
Convolutional layer 2 & 64, (4, 4), (2, 2) & \\
Convolutional layer 3 & 64, (3, 3), (1, 1) & \\
\midrule
n\_layers                                    & 1         &  Number of hidden layers $\lambda$         \\
n\_nodes                                     & 512       &   Number of nodes $\eta$ per hidden layer            \\
\bottomrule
\end{tabular}
\end{table}

\paragraph{Detailed results}
Figure \ref{fig:all-games} shows the online training performance of CR-DQN in comparison to the pure distributional contenders C51, QR-DQN (aka QR-DQN-1) and IQN, on the full Atari-57 benchmark. For C51, QR-DQN and IQN, 5 seeds were used (provided by \dqnzoo \cite{dqnzoo2020github}). For CR-DQN, 3 seeds were used.  

\begin{figure}
\centering
\includegraphics[height=.94\textheight]{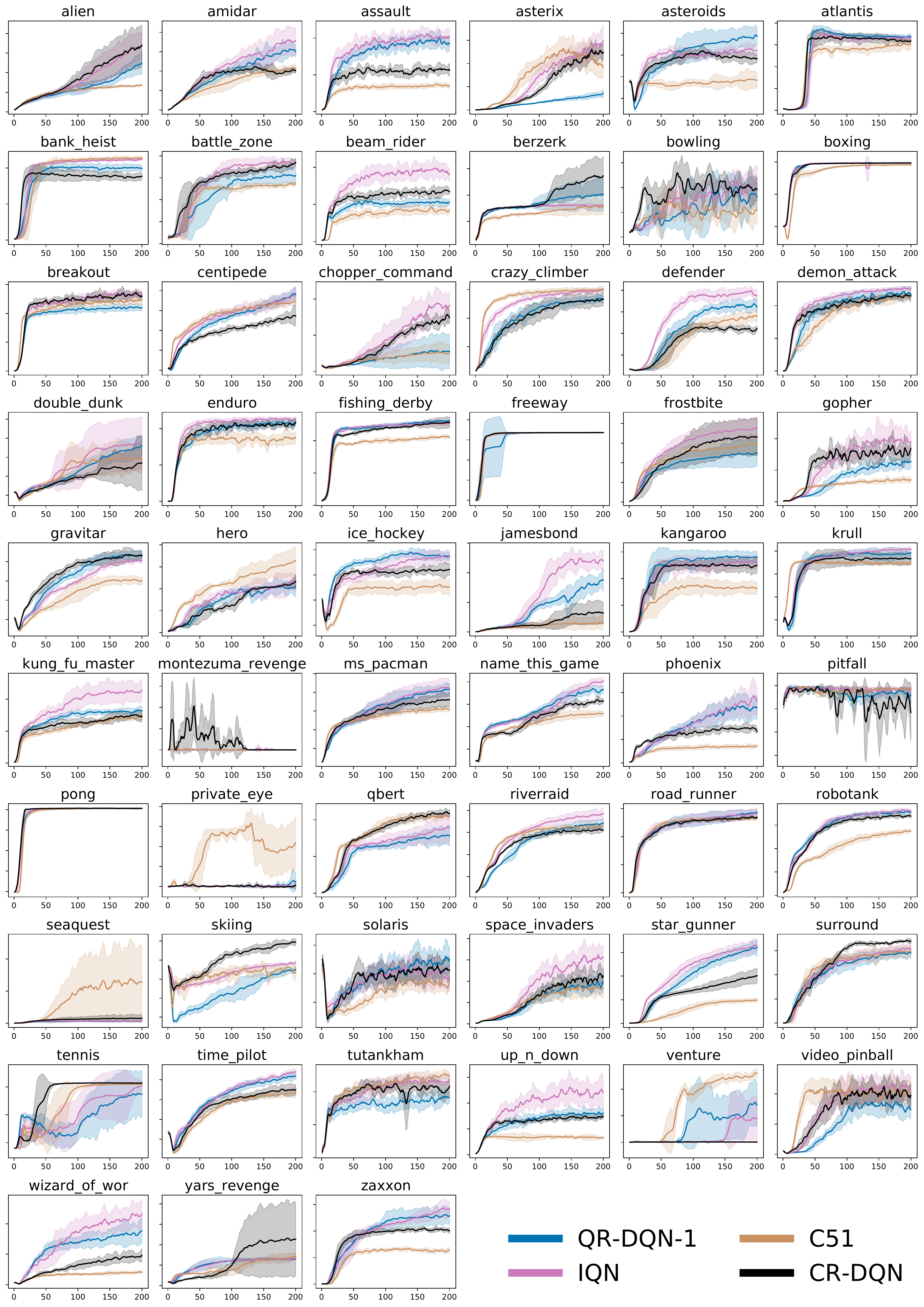}
	\caption{{\bf Training performance on the Atari-57 benchmark.} Curves are averages over a number of seeds, smoothed over a sliding window of 5 iterations, and error bands give standard deviations.  }
	\label{fig:all-games} 
\end{figure} 

\paragraph{On the empirical matching of NC-QR-DQN-0 and NC-CR-DQN}
In order to make NC-QR-DQN-0 and NC-CR-DQN as practically equivalent as possible for the experiments of Figure \ref{fig:atari}, the gradient of NC-QR-DQN-0 was scaled by a factor of $2/N$ to make the effect of gradient clipping by max\_global\_grad\_norm equivalent and the same optimizer\_epsilon was used (see Table \ref{tab:hyperparams}). 
Despite this, numerical errors and GPU non-determinism still produce different results.

\end{document}